\definecolor{MyBrown}{rgb}{0.3,0,0}
\definecolor{MyBlue}{rgb}{0,0,1}
\definecolor{MyRed}{rgb}{0.5,0,0}
\definecolor{MyGreen}{rgb}{0,0.4,0}
\def\R{{\mathbb{R}}}
\def\R{{\mathbb{R}}}
\def\0{{\bm 0}}
\def\1{{\bm 1}}
\newtheorem{thm} {Theorem}
\newtheorem{lem}[thm] {Lemma}
\newtheorem{cor}[thm] {Corollary}
\newtheorem{prp}[thm] {Proposition}
\newtheorem{df}[thm]{Definition}
\title{
\Large Universal Approximation Theorem for Equivariant Maps by Group CNNs}
\author{
  \textbf{Wataru Kumagai}\thanks{\texttt{kumagai@weblab.t.u-tokyo.ac.jp}}
    \\
  The University of Tokyo, RIKEN AIP
  \and
  \textbf{Akiyoshi Sannai} \\
  RIKEN AIP\\
}
\begin{document}
\maketitle

\begin{abstract}
Group symmetry is inherent in a wide variety of data distributions. Data processing that preserves symmetry is described as an equivariant map and often effective in achieving high performance. Convolutional neural networks (CNNs) have been known as models with equivariance and shown to approximate equivariant maps for some specific groups. However, universal approximation theorems for CNNs have been separately derived with individual techniques according to each group and setting. This paper provides a unified method to obtain universal approximation theorems for equivariant maps by CNNs in various settings. As its significant advantage, we can handle non-linear equivariant maps between infinite-dimensional spaces for non-compact groups.
\end{abstract}

\keywords{
Universal Approximation Theorem
\and Equivariance 
\and Symmetry
\and Convolution
}
\section{Introduction}

Deep neural networks have been widely used as models to approximate underlying functions in various machine learning tasks.
The expressive power of fully-connected deep neural networks was first mathematically guaranteed by the universal approximation theorem in \citet{cybenko1989approximation}, which states that any continuous function on a compact domain can be approximated with any precision by an appropriate neural network with sufficient width and depth.
Beyond the classical result stated above,
several types of variants of the universal approximation theorem have also been investigated under different conditions.

Among a wide variety of deep neural networks, 
convolutional neural networks (CNNs) have achieved impressive performance for real applications.
In particular, almost all of state-of-the-art models for image recognition are based on CNNs.
These successes are closely related to the property that performing CNNs commute with translation on pixel coordinate. 
That is, CNNs can conserve symmetry about translation in image data.
In general, this kind of property for symmetry is known as the \textit{equivariance}, which is a generalization of the \textit{invariance}.
When a data distribution has some symmetry and the task to be solved relates to the symmetry, 
data processing is desired to be equivariant on the symmetry.
In recent years, 
different types of symmetry have been focused per each task, and 
it has been proven that CNNs can approximate arbitrary equivariant data processing for specific symmetry.
These results are mathematically captured as the universal approximation for equivariant maps and represent the theoretical validity of the use of CNNs. 

In order to theoretically correctly handle symmetric structures,
we have to carefully consider the structure of data space where data distributions are defined.
For example, 
in image recognition tasks, image data are often supposed to have symmetry for translation. 
When each image data is acquired, 
there are finite pixels equipped with an image sensor, and an image data is represented by a finite-dimensional vector in a Euclidean space $\mathbb{R}^{d}$, where $d$ is the number of pixels.
However, we note that the finiteness of pixels stems from the limit of the image sensor
and a raw scene behind the image data is thought to be modelled by an element in $\mathbb{R}^{\mathcal{S}}$ with continuous spatial coordinates $\mathcal{S}$, 
where $\mathbb{R}^{\mathcal{S}}$ is a set of functions from $\mathcal{S}$ to $\mathbb{R}$. 
Then, the element in $\mathbb{R}^{\mathcal{S}}$ is regarded as a functional representation of the image data in $\mathbb{R}^d$.
In this paper, in order to appropriately formulate data symmetry, 
we treat both typical data representation in finite-dimensional settings and functional representation in infinite-dimensional settings in a unified manner.


%
%

\subsection{Related Works}

\textbf{Symmetry and functional representation.}
Symmetry is mathematically described in terms of groups and has become an essential concept in machine learning.
\citet{gordon2019convolutional} point out that, when data symmetry is represented by a infinite group like the translation group,
equivariant maps, which are symmetry-preserving processing, cannot be captured as maps between finite-dimensional spaces but can be described by maps between infinite-dimensional function spaces.
As a related study about symmetry-preserving processing, \citet{finzi2020generalizing} propose group convolution of functional representations and investigate practical computational methods such as discretization and localization.


\textbf{Universal approximation for continuous maps.} 
The universal approximation theorem, which is the main objective of this paper, is one of the most classical mathematical theorems of neural networks.  
The universal approximation theorem states that a feedforward fully-connected network (FNN) with a single hidden layer containing finite neurons can approximate a continuous function on a compact subset of $\mathbb{R}^d$.
\citet{cybenko1989approximation} proved this theorem for the sigmoid activation function. 
After his work, some researchers showed similar results to generalize the sigmoidal function to a larger class of activation functions as \cite{barron1994approximation}, \cite{hornik1989multilayer}, \cite{funahashi1989approximate}, \cite{kurkova1992} and \cite{sonoda2017neural}.  
These results were approximations to functional representations between finite-dimensional vector spaces, but recently \citet{guss2019universal} generalized them to continuous maps between infinite-dimensional function spaces in \cite{guss2019universal}.

\textbf{Equivariant neural networks.}  
The concept of group-invariant neural networks 
was first introduced in 
\citet{shawe1989building} in the case of permutation groups. 
In addition to the invariant case,
\cite{deepsets} designed group-equivariant neural networks for permutation groups and obtained excellent results in many applications.
\cite{maron2018invariant,maron2020learning} consider and develop a theory of equivariant tensor networks for general finite groups.
\cite{petersen2020equivalence} established a connection between group CNNs, which are equivariant networks, and FNNs for group finites.
%
%
%
However, symmetry are not limited to finite groups.
Convolutional neural networks (CNNs) was designed to be equivariant for translation groups and achieved impressive performance in a wide variety of tasks.
\cite{Gens2014} proposed architectures that are based on CNNs and invariant to more general groups including affine groups.
Motivated by CNN's experimental success, many researchers have further generalized this by using group theory. 
\cite{kondor2018generalization} proved that, when a group is compact and the group action is transitive,
a neural network constrained by some homogeneous structure is equivariant if and only if it becomes a group CNN.


\textbf{Universal approximation for equivariant maps.}
Compared to the vast studies about universal approximation for continuous maps,
there are few existing studies about universal approximation for equivariant maps.
\citet{sannai2019universal,ravanbakhsh2020universal,keriven2019universal} considered the equivariant model for finite groups and proved universal approximation property of them by attributing it to the results of \cite{maron2019universality}.
\cite{cohen2019general} considered group convolution on a homogeneous space and proved that a linear equivariant map is always convolution-like.
\cite{yarotsky2018universal} proved universal approximation theorems for nonlinear equivariant maps by CNN-like models when groups are the $d$-dimensional translation group $T(d)=\mathbb{R}^d$ or the $2$-dimensional Euclidean group $\mathrm{SE}(2)$.
%
However, when groups are more general, universal approximation theorems for non-linear equivariant maps have not been obtained.


\subsection{Paper Organization and Our Contributions}

The paper is organized as follows.
In section \ref{sec: group-equivariance},
we introduce the definition of group equivariant maps and provide the essential property that equivariant maps have one-to-one correspondence to theoretically tractable maps called generators.
In section \ref{sec:NN},
we define fully-connected and group convolutional neural networks between function spaces.
This formulation is suitable to represent data symmetry. Then, we provide a main theorem called the conversion theorem that can convert FNNs to CNNs.  
In section \ref{sec:UAT}, using the conversion theorem,
we derive universal approximation theorems for non-linear equivariant maps by group CNNs. 
In particular, this is the first universal approximation theorem for equivariant maps in infinite-dimensional settings. 
We note that finite and infinite groups are handled in a unified manner. 
In section \ref{sec:conclusion}, 
we provide concluding remarks
and mention future works.

\section{Group Equivariance} \label{sec: group-equivariance}

\subsection{Preliminaries}

We introduce definitions and terminology used in the later discussion.

{\textbf{Functional representation.}}
In this paper, sets denoted by $\mathcal{S}$, $\mathcal{T}$ and $G$ are assumed to be locally compact, $\sigma$-compact, Hausdorff spaces.
When $\mathcal{S}$ is a set,
we denote by $\mathbb{R}^{\mathcal{S}}$ the set of all maps from $\mathcal{S}$ to $\mathbb{R}$ and by $\|\cdot\|_{\infty}$ the supremum norm.
We call $\mathcal{S}$ of $\mathbb{R}^{\mathcal{S}}$ the index set.
We denote by $\mathcal{C}(\mathcal{S})$ the set of all continuous maps from $\mathcal{S}$ to $\mathbb{R}$.
We denote by $\mathcal{C}_0(\mathcal{S})$ the set of continuous functions from $\mathcal{S}$ to $\mathbb{R}$ which vanish at infinity\footnote{A function $f$ on a locally compact space is said to vanish at infinity if, for any $\epsilon$, there exists a compact subset $\mathcal{K}\subset\mathcal{S}$ such that $\sup_{s\in\mathcal{S}\setminus\mathcal{K}}|f(s)|<\epsilon$.}.
For a Borel space $\mathcal{S}$ with some measure $\mu$,
we denote the set of integrable functions from $\mathcal{S}$ to $\mathbb{R}$ with respect to $\mu$ as $L^{1}_{\mu}(\mathcal{S})$.  
For a subset $\mathcal{B}\subset \mathcal{S}$,
the restriction map $R_{\mathcal{B}}:\mathbb{R}^{\mathcal{S}}\to \mathbb{R}^{\mathcal{B}}$ is defined by $R_{\mathcal{B}}(x) = x|_{\mathcal{B}}$, 
where $x\in\mathbb{R}^{\mathcal{S}}$ and $x|_{\mathcal{B}}$ is the restriction of the domain of $x$ onto $\mathcal{B}$.

When $\mathcal{S}$ is a finite set, 
$\mathbb{R}^{\mathcal{S}}$ is identified with the finite-dimensional Euclidean space $\mathbb{R}^{|\mathcal{S}|}$, where $|\mathcal{S}|$ is the cardinality of $\mathcal{S}$.
In this sense, $\mathbb{R}^{\mathcal{S}}$ for general sets $\mathcal{S}$ is a generalization of Euclidean spaces.
However, $\mathbb{R}^{\mathcal{S}}$ itself is often intractable for an infinite set $\mathcal{S}$.
In such cases, we instead consider 
$\mathcal{C}(\mathcal{S})$, $\mathcal{C}_0(\mathcal{S})$ or $L^{p}(\mathcal{S})$ as relatively tractable subspaces of $\mathbb{R}^{\mathcal{S}}$.

{\bf{Group action.}}
%
We denote the identity element in a group $G$ by $1$.
We assume that the action of a group $G$ on a set $\mathcal{S}$ is continuous. 
We denote by $g\cdot s$ the left action of $g\in G$ to $s\in \mathcal{S}$.
Then we call $G_s:=\{g\cdot s|g\in G\}$ the orbit of $s\in\mathcal{S}$.
From the definition, we have $\mathcal{S}=\bigcup_{s\in\mathcal{S}} G_s$.
When a subset $\mathcal{B}\subset \mathcal{S}$ is the set of representative elements from all orbits, 
it satisfies the disjoint condition $\mathcal{S}=\bigsqcup_{s\in \mathcal{B}}G_s$.
Then, we call $\mathcal{B}$ a base space\footnote{The choice of the base space is not unique in general. However, the topological structure of a base space can be induced by the quotient space $\mathcal{S}/G$.
}
and define the projection $P_{\mathcal{B}}:\mathcal{S}\to\mathcal{B}$ by mapping $s\in\mathcal{S}$ to the representative element in $\mathcal{B}\cap G_s$.
When a group $G$ acts on sets $\mathcal{S}$ and $\mathcal{T}$,
the action of $G$ on the product space $\mathcal{S}\times \mathcal{T}$ is defined by $g\cdot(s,t):=(g\cdot s,g\cdot t)$.
%
%
When a group $G$ acts on a index set $\mathcal{S}$, 
the $G$-translation operators $T_g:\mathbb{R}^{\mathcal{S}}\to \mathbb{R}^{\mathcal{S}}$ for $g\in G$ are defined by $T_g[x](s) := x(g^{-1}\cdot s)$, 
where $x\in\mathbb{R}^{\mathcal{S}}$ and $s\in \mathcal{S}$.
We often denote $T_g[x]$ simply by $g\cdot x$ for brevity. 
Then, group translation determine the action\footnote{We note that $T_g \circ T_{g'} = T_{g'g}$ and the group translation operator is the action of $G$ on $\mathbb{R}^{\mathcal{S}}$ from the right.} of $G$ on $\mathbb{R}^{\mathcal{S}}$.

\begin{figure}[t]
    \begin{center}
        \begin{tabular}{c}
        \hspace{-1cm}
        \includegraphics[bb=0 0 800 570, 
        scale=0.30]{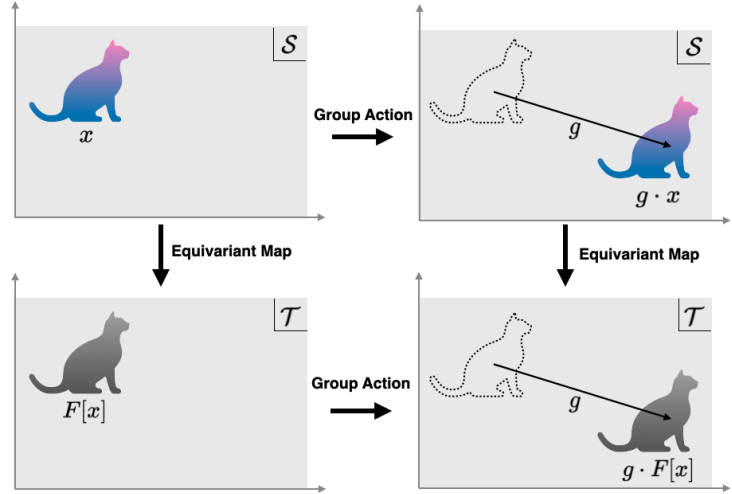}
        \end{tabular}
    \caption{An example of an equivariant map from RGB images to gray-scale images. 
    An RGB image $x$ is represented by values (i.e., a function) on 2-dimensional spatial coordinates with RGB channels.
    This corresponds to the case where the index set is  $\mathcal{S}=\mathbb{R}^{2\times 3}=\mathbb{R}^6$.
    Similarly, a gray-scale image $F[x]$ after equivariant processing $F:\mathbb{R}^\mathcal{S}\to\mathbb{R}^\mathcal{T}$ is represented by values on 2-dimensional spatial coordinates with a single gray-scale channel.
    This corresponds to the case where the index set is $\mathcal{T}=\mathbb{R}^2$.
    In this figure, the group action is translation of $G=\mathbb{R}^2$ to 2-dimensional spatial coordinates.
    \label{fig:equivariance}}
    \end{center}
\end{figure}

\subsection{Group Equivariant Maps} \label{properties}

In this section,
we introduce group equivariant maps and show their basic properties. 
First, we define group equivariance.
\begin{df}[Group Equivariance] \label{df:equivariance}
    Suppose that a group $G$ acts on sets $\mathcal{S}$ and $\mathcal{T}$. 
    Then, a map $F:\mathbb{R}^{\mathcal{S}}\to\mathbb{R}^{\mathcal{T}}$ is called $G$-equivariant when $F[g\cdot x]=g\cdot F[x]$ holds for any $g\in G$ and $x\in\mathbb{R}^{\mathcal{S}}$. 
\end{df}
An example of an equivariant map in image processing is provided in Figure \ref{fig:equivariance}.


To clarify the degree of freedom of equivariant maps, 
we define the generator of equivariant maps.

\begin{df}[Generator]
    Let $\mathcal{B}\subset \mathcal{T}$ be a base space with respect to the action of $G$ on $\mathcal{T}$.
    For a $G$-equivariant map $F:\mathbb{R}^{\mathcal{S}}\to \mathbb{R}^{\mathcal{T}}$, 
    we call $F_{\mathcal{B}}:=R_{\mathcal{B}}\circ F$ the generator of $F$.
\end{df}

The following theorem shows that equivariant maps can be represented by their generators.

\begin{thm}[Degree of Freedom of Equivariant Maps]\label{degree-simplified-ver}
Let a group $G$ act on sets $\mathcal{S}$ and $\mathcal{T}$, and $\mathcal{B}\subset\mathcal{T}$ a base space.
Then, a $G$-equivariant map $F:\mathbb{R}^{\mathcal{S}}\to \mathbb{R}^{\mathcal{T}}$ has one-to-one correspondence to its generator $F_{\mathcal{B}}$.
\end{thm}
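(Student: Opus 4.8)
The plan is to produce an explicit formula that reconstructs every value of an equivariant $F$ from its generator $F_{\mathcal{B}}$, and then to run this construction in reverse to exhibit the inverse correspondence. First I would record the elementary consequence of equivariance: for every $g\in G$, $x\in\mathbb{R}^{\mathcal{S}}$ and $s\in\mathcal{T}$, unwinding the definition of the $G$-translation operator in $F[g\cdot x]=g\cdot F[x]$ gives $F[g\cdot x](s)=F[x](g^{-1}\cdot s)$. Because $\mathcal{B}$ is a base space we have the disjoint decomposition $\mathcal{T}=\bigsqcup_{b\in\mathcal{B}}G_b$, so every $t\in\mathcal{T}$ can be written as $t=g\cdot b$ with $b:=P_{\mathcal{B}}(t)\in\mathcal{B}$ and some $g\in G$. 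Applying the identity above with $g^{-1}$ in place of $g$ yields $F[g^{-1}\cdot x](b)=F[x](g\cdot b)=F[x](t)$, and since $b\in\mathcal{B}$ the left-hand side is exactly $F_{\mathcal{B}}[g^{-1}\cdot x](b)$. This gives the reconstruction formula $F[x](t)=F_{\mathcal{B}}[g^{-1}\cdot x](P_{\mathcal{B}}(t))$.

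Injectivity of the assignment $F\mapsto F_{\mathcal{B}}$ is then immediate: the formula expresses every value $F[x](t)$ using only $F_{\mathcal{B}}$ and the given $G$-action, so two equivariant maps with the same generator agree everywhere.

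For the reverse direction I would fix, for each $t\in\mathcal{T}$, an element $g_t\in G$ with $g_t\cdot P_{\mathcal{B}}(t)=t$, choosing $g_b=1$ for $b\in\mathcal{B}$, and define $\widetilde{F}[x](t):=F_{\mathcal{B}}[g_t^{-1}\cdot x](P_{\mathcal{B}}(t))$. One then checks that $R_{\mathcal{B}}\circ\widetilde{F}=F_{\mathcal{B}}$ (immediate from $g_b=1$) and that $\widetilde{F}$ is $G$-equivariant by comparing $\widetilde{F}[g\cdot x](t)=F_{\mathcal{B}}[(g_t^{-1}g)\cdot x](b)$ with $\widetilde{F}[x](g^{-1}\cdot t)=F_{\mathcal{B}}[g_{g^{-1}\cdot t}^{-1}\cdot x](b)$, where $b=P_{\mathcal{B}}(t)=P_{\mathcal{B}}(g^{-1}\cdot t)$ since $t$ and $g^{-1}\cdot t$ lie in the same orbit.

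\textbf{Main obstacle.} The two elements $g^{-1}g_t$ and $g_{g^{-1}\cdot t}$ both send $b$ to $g^{-1}\cdot t$, so they differ by an element $h$ of the stabilizer $\{h\in G:h\cdot b=b\}$, say $g^{-1}g_t=g_{g^{-1}\cdot t}\,h$. Substituting this turns the left-hand expression into $F_{\mathcal{B}}[h^{-1}\cdot(g_{g^{-1}\cdot t}^{-1}\cdot x)](b)$, so the desired equality collapses to the stabilizer-invariance condition $F_{\mathcal{B}}[h^{-1}\cdot y](b)=F_{\mathcal{B}}[y](b)$ for all such $h$ and all $y$. This is precisely the subtlety introduced by a non-free action and is the crux of the argument. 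I expect to discharge it by observing that any genuine generator automatically satisfies it: for equivariant $F$ and $h\cdot b=b$ one has $F_{\mathcal{B}}[h^{-1}\cdot y](b)=F[h^{-1}\cdot y](b)=F[y](h\cdot b)=F[y](b)$. Hence reconstruction from any generator produces a well-defined equivariant map whose generator is the original one, which together with injectivity establishes the claimed one-to-one correspondence; the same computation identifies the correspondence as being precisely with those maps $\mathbb{R}^{\mathcal{S}}\to\mathbb{R}^{\mathcal{B}}$ obeying the stabilizer-invariance condition.
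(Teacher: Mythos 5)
Your proof is correct, and it follows the same basic route as the paper's own argument (Theorem~\ref{degree} in the appendix): the forward direction derives the identical reconstruction formula $F[x](t)=F_{\mathcal{B}}[g_t^{-1}\cdot x](P_{\mathcal{B}}(t))$, and injectivity of $F\mapsto F_{\mathcal{B}}$ follows from it exactly as in the paper. Where you genuinely depart from the paper is the converse direction, and your version is the more careful one. The paper asserts that the formula turns an \emph{arbitrary} map $F_{\mathcal{B}}:\mathcal{C}(\mathcal{S})\to\mathcal{C}(\mathcal{B})$ into an equivariant map, and its verification silently substitutes $g^{-1}g_t$ for $g_{g^{-1}\cdot t}$, i.e.\ it assumes the value of the formula is independent of which element carrying $P_{\mathcal{B}}(t)$ to $t$ is chosen. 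That independence is precisely the stabilizer-invariance condition you isolate as the ``main obstacle,'' and it can fail for an arbitrary $F_{\mathcal{B}}$ when the action on $\mathcal{T}$ has non-trivial stabilizers: for instance, if $G$ acts trivially on $\mathcal{T}$, equivariant maps are exactly the $G$-invariant ones, so not every map $\mathbb{R}^{\mathcal{S}}\to\mathbb{R}^{\mathcal{B}}$ can be a generator, contrary to the paper's converse as literally stated. Your resolution --- fixing choices $g_t$ with $g_b=1$ on the base space, locating the discrepancy $h$ in the stabilizer of $b$, showing that any generator of a genuine equivariant map satisfies $F_{\mathcal{B}}[h^{-1}\cdot y](b)=F[y](h\cdot b)=F[y](b)$, and concluding that the correspondence is onto precisely the stabilizer-invariant maps --- closes this gap and sharpens the statement: the bijection is between $G$-equivariant maps and stabilizer-invariant maps $\mathbb{R}^{\mathcal{S}}\to\mathbb{R}^{\mathcal{B}}$, which coincides with the paper's claim exactly when the action on $\mathcal{T}$ is free. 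So your argument buys a correct and more precise converse at the modest cost of tracking the stabilizer, whereas the paper's shorter computation is only valid under an unstated freeness (or choice-coherence) assumption.
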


A detailed version of Theorem \ref{degree-simplified-ver} is proved in Section \ref{sec:property-of-equiv}.

\section{Fully-connected and Group Convolutional Neural Networks} \label{sec:NN}

\subsection{Fully-connected Neural Networks}

To define neural networks, we introduce some notions.
A map $A:\mathbb{R}^{\mathcal{S}}\to\mathbb{R}^{\mathcal{T}}$ is called a bounded affine map if there exist a bounded linear map $W:\mathbb{R}^{\mathcal{S}}\to\mathbb{R}^{\mathcal{T}}$ and an element $b\in\mathbb{R}^{\mathcal{T}}$ such that 
\begin{align}\label{affine0}
    A[x]= W[x] + b.
\end{align}

\citet{guss2019universal} provide the following lemma, which is useful to handle bounded affine maps.
\begin{lem}[Integral Form, 
\citet{guss2019universal}] \label{operator-layer}
Suppose that $\mathcal{S}$ and $\mathcal{T}$ are locally compact, $\sigma$-compact, Hausdorff, measurable spaces.
For a bounded linear map $W:\mathcal{C}(\mathcal{S})\to\mathcal{C}(\mathcal{T})$, 
there exist a Borel regular measure $\mu$ on $\mathcal{S}$ and a weak$^*$ continuous family of functions $\{w(t,\cdot)\}_{t\in\mathcal{T}}\subset L^1_{\mu}(\mathcal{S})$ such that the following holds for any $x\in\mathcal{C}(\mathcal{S})$:
\begin{align*}
W[x](t)
=\int_{\mathcal{S}} w(t,s)x(s)  d\mu(s).
\end{align*}
\end{lem}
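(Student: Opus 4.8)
The plan is to produce the kernel $w$ by applying the Riesz--Markov--Kakutani representation theorem pointwise in the output variable $t$, and then to glue the resulting $t$-indexed family of measures into a single integral kernel against one fixed reference measure $\mu$. The three ingredients are: a pointwise Riesz representation, the construction of a common dominating measure, and a Radon--Nikodym differentiation.

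First I would fix $t\in\mathcal{T}$ and consider the functional $\ell_t:\mathcal{C}(\mathcal{S})\to\mathbb{R}$ defined by $\ell_t(x):=W[x](t)$. Since $W$ is bounded and evaluation at $t$ is a bounded functional on $\mathcal{C}(\mathcal{T})$ with respect to the supremum norm, the composition $\ell_t$ is a bounded linear functional on $\mathcal{C}(\mathcal{S})$. Because $\mathcal{S}$ is locally compact, $\sigma$-compact and Hausdorff, the Riesz--Markov--Kakutani theorem (in the form guaranteeing a genuine regular Borel measure rather than a merely finitely additive one) supplies a regular signed Radon measure $\nu_t$ on $\mathcal{S}$ with
\[
W[x](t)=\int_{\mathcal{S}} x(s)\,d\nu_t(s),\qquad x\in\mathcal{C}(\mathcal{S}),
\]
whose total variation is controlled by $\|\ell_t\|\le\|W\|$. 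This already exhibits $W$ as an integral operator, but against a measure that depends on $t$; the remaining task is to replace the family $\{\nu_t\}$ by densities against a single measure.

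Next I would build the common reference measure $\mu$. Writing $\mathcal{T}=\bigcup_n K_n$ as an increasing union of compacta via $\sigma$-compactness, the key observation is that for each fixed $x$ the map $t\mapsto\int_{\mathcal{S}} x\,d\nu_t=W[x](t)$ is continuous, so $t\mapsto\nu_t$ is weak$^*$ continuous; its restriction to each $K_n$ therefore has weak$^*$-compact image and, by the uniform boundedness principle, uniformly bounded total variation, say $\sup_{t\in K_n}|\nu_t|(\mathcal{S})\le M_n$. Exhausting $\mathcal{S}$ likewise and forming a weighted average of the total-variation measures over a countable selection of the $\nu_t$, I would assemble a single $\sigma$-finite, Borel regular measure $\mu$ with $|\nu_t|\ll\mu$ for every $t$. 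This domination step is the crux: because the index set is a priori uncountable, one must use the weak$^*$ continuity together with the compact exhaustion to reduce to countably many measures whose absolute-continuity classes already dominate the entire family, all while preserving both $\sigma$-finiteness and Borel regularity of $\mu$. I expect this gluing to be the main obstacle.

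Finally, with $\nu_t\ll\mu$ for all $t$, the Radon--Nikodym theorem yields densities $w(t,\cdot):=d\nu_t/d\mu\in L^1_{\mu}(\mathcal{S})$, and substituting gives the claimed identity $W[x](t)=\int_{\mathcal{S}} w(t,s)x(s)\,d\mu(s)$. The asserted weak$^*$ continuity of $\{w(t,\cdot)\}_{t\in\mathcal{T}}$ then follows immediately from the continuity of $t\mapsto W[x](t)$: for each $x\in\mathcal{C}(\mathcal{S})$ the pairing $\langle w(t,\cdot),x\rangle=W[x](t)$ varies continuously in $t$, which is exactly the continuity condition required of the family. The pointwise representation and the Radon--Nikodym step are comparatively routine, so essentially all of the difficulty is concentrated in constructing the single dominating measure.
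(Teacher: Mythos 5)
Your argument has a genuine gap at exactly the step you yourself flag as the crux, and that gap cannot be closed: boundedness of $W$ alone does not yield a single dominating measure. Consider $\mathcal{S}=\mathcal{T}=[0,1]$ and $W=\mathrm{id}$, or, to rule out non-regular tricks such as counting measure, $W[x](t)=x(t)+\int_0^1 K(t,s)x(s)\,ds$ with a nonzero continuous kernel $K$ and Lebesgue measure $\lambda$. Your first step gives $\nu_t=\delta_t$ (resp.\ $\delta_t+K(t,\cdot)\lambda$), and $t\mapsto\nu_t$ is indeed weak$^*$ continuous. But $w(t,\cdot)\mu=\delta_t$ forces $\mu(\{t\})>0$ for \emph{every} $t\in[0,1]$, i.e.\ uncountably many atoms: such a $\mu$ is never $\sigma$-finite, and it cannot be outer regular (every nonempty open set would get infinite measure, forcing $\mu(\{t\})=\infty$, which is incompatible with an $L^1_\mu$ density reproducing $\delta_t$). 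Worse, since any $w(t,\cdot)\in L^1_\mu$ is supported on a $\sigma$-finite set and here every singleton has positive mass, $w(t,\cdot)$ is supported on a countable set, so $w(t,\cdot)\mu$ is purely atomic and can never reproduce the diffuse part $K(t,\cdot)\lambda$. This shows concretely that your proposed reduction --- ``weak$^*$ continuity plus compact exhaustion reduces domination to countably many of the $\nu_t$'' --- is false: $\{\delta_t\}_{t}$ is weak$^*$ continuous, yet no countable subfamily, and in fact no admissible measure at all, dominates the whole family. Weak$^*$ continuity imposes no absolute-continuity relations whatsoever between nearby $\nu_t$.

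There is a secondary problem at the first step as well: the Riesz--Markov--Kakutani theorem in the ``countably additive regular Borel measure'' form describes the dual of $\mathcal{C}_0(\mathcal{S})$, not of $\mathcal{C}(\mathcal{S})$ (whose elements need not even be bounded when $\mathcal{S}$ is non-compact); duals of spaces of bounded continuous functions involve finitely additive measures, which is precisely the pathology you tried to exclude by fiat. For comparison with the paper: there is nothing to compare against, since the paper does not prove this lemma but imports it from \citet{guss2019universal}. What your attempt actually uncovers is that the statement, read literally for all bounded linear $W$, cannot be proved without further hypotheses, because integral operators with a fixed $L^1$ kernel form a strict subclass of bounded operators (excluding, e.g., the identity). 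The paper's later constructions sidestep this by exhibiting networks whose layers are given directly in integral form, but the lemma as stated is not something any proof along your lines --- or any lines --- can deliver.
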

To use the integral form,
we assume in the following that the input and output spaces of $A$ are the class of continuous maps $\mathcal{C}(\mathcal{S})$ and $\mathcal{C}(\mathcal{T})$ instead of $\mathbb{R}^{\mathcal{S}}$ and $\mathbb{R}^{\mathcal{T}}$, respectively.
Using the integral form,
a bounded affine map $A$ is represented by 
\begin{align}\label{affine2}
    A_{\mu,w,b}[x](t)= \int_{\mathcal{S}} w(t,s) x(s) d\mu(s) + b(t).
\end{align}

In particular, 
when $\mathcal{S}$ and $\mathcal{T}$ are finite sets with cardinality $d$ and $d'$,
the function spaces $\mathcal{C}(\mathcal{S})$ and $\mathcal{C}(\mathcal{T})$ are identified with finite-dimensional Euclidean spaces $\R^{d}$ and $\R^{d'}$,
and thus,
an affine map $A:\mathbb{R}^d\to\mathbb{R}^{d'}$ is parameterized by a weight matrix $W=[w(t,s)]_{s\in[d],t\in[d']}:\mathbb{R}^{d}\to\mathbb{R}^{d'}$ and a bias vector $b=[b(t)]_{t\in[d']}\in \mathbb{R}^{d'}$,
and (\ref{affine2}) induces the following form, which is often used in the literature on neural networks:
\begin{align}\label{affine1}
A[x](t) =\sum_{s=1}^{d}  w(t,s)x(s) + b(t).
\end{align}


A continuous function $\rho:\R\to\R$ induces the activation map $\alpha_\rho: \mathcal{C}(\mathcal{S})\to \mathcal{C}(\mathcal{S})$ which is defined by $\alpha_\rho(x):=\rho\circ x \in \mathcal{C}(\mathcal{S})$ for $x\in\mathcal{C}(\mathcal{S})$.
However, for brevity, we denote $\alpha_{\rho}$ by $\rho$.
Then, we can define fully-connected neural networks in general settings.

\begin{df}[Fully-connected Neural Networks]
Let $L\in\mathbb{N}$.
A fully-connected neural network with $L$ layers is a composition map of bounded affine maps $(A_1,\ldots, A_L)$ and an activation map $\rho$ represented by 
\begin{align}
    \phi:=A_L \circ \rho \circ A_{L-1} \circ \cdots \circ \rho \circ A_1, \label{eq:FNN}
\end{align}    
where $A_{\ell}:\mathcal{C}(\mathcal{S}_{\ell-1})\to\mathcal{C}(\mathcal{S}_{\ell})$ are affine maps for some sequence of sets $\{\mathcal{S}_\ell\}_{\ell=0}^L$.
Then, we denote by  $\mathcal{N}_{\sf{FNN}}(\rho, L; \mathcal{S}_0, \mathcal{S}_L)$ the set of all fully-connected neural networks from $\mathcal{C}(\mathcal{S}_0)$ to $\mathcal{C}(\mathcal{S}_L)$ with $L$ layers and an activation function $\rho$.
\end{df}
We denote the measure of the affine map $A_1$ in the first layer of a fully-connected neural network $\phi$ by $\mu_{\phi}$. 
This measure $\mu_{\phi}$ is used to describe a condition in the main theorem (Theorem \ref{conversion-thm}).

\subsection{Group Convolutional Neural Networks}

We introduce the general form of group convolution.
\begin{df}[Group Convolution]\label{G-conv}
    Suppose that a group $G$ acts on sets $\mathcal{S}$ and $\mathcal{T}$. 
    For a $G$-invariant measure $\nu$ on $\mathcal{S}$, $G$-invariant functions $v:\mathcal{S}\times \mathcal{T}\to\mathbb{R}$ and $b\in\mathcal{C}(\mathcal{T})$,
    the biased $G$-convolution $C_{\nu,v,b}:\mathcal{C}(\mathcal{S})\to\mathcal{C}(\mathcal{T})$ is defined as
    \begin{align}
        C_{\nu, v,b}[x](t) 
        := \int_{\mathcal{S}} v(t,s) x(s) d\nu(s) + b(t). \label{eq:conv}
    \end{align}
    In the right hand side, we call the first term the $G$-convolution and the second term the bias term.
\end{df}
In the following, we denote $C_{\nu, v,b}$ by $C$ for brevity.
When $\mathcal{S}$ and $\mathcal{T}$ are finite,
we note that (\ref{eq:conv}) also can be represented as  (\ref{affine1}).

Definition \ref{G-conv} includes existing definitions of group convolution as follows.
When $\mathcal{S}=\mathcal{T}=G$,
the group $G$ acts on $\mathcal{S}$ and $\mathcal{T}$ by left translations.
Then, (\ref{eq:conv}) without the bias term (i.e., $b=0$) is described as 
\begin{align*}
    C[x](g) 
    = \int_{\mathcal{G}} v(g,h) x(h) d\nu(h)
    = \int_{\mathcal{G}} \tilde{v}(h^{-1} g) x(h) d\nu(h), 
\end{align*}
where\footnote{A bivariate $G$-invariant function $v:G\times G\to\mathbb{R}$ is determined by the univariate function $\tilde{v}:G\to\mathbb{R}$ because $v(g,h)=v(h^{-1}g,h^{-1}h)=v(h^{-1}g,1)=\tilde{v}(h^{-1}g)$. } $\tilde{v}(g):=v(g,1)$.
This is a popular definition of group convolution between two functions on $G$.
Further, when $\mathcal{S}=G\times \mathcal{B}$ and $\mathcal{T}=G\times \mathcal{B}'$, 
(\ref{eq:conv}) without the bias term is described as 
\begin{align*}
    C[x](g,t) 
    = \int_{\mathcal{G}\times \mathcal{B}} v((g,\tau), (h,\varsigma)) x(h,\varsigma) d\nu(h,\varsigma)
    = \int_{\mathcal{G}\times \mathcal{B}} \tilde{v}(h^{-1}g,\tau,\varsigma) x(h,\varsigma) d\nu(h,\varsigma), 
\end{align*}
where $\tilde{v}(g,\tau, \varsigma):=v((g,\tau),(1,\varsigma))$.
This coincides with the definition of group convolution in \citet{finzi2020generalizing}.
We note that \citet{finzi2020generalizing} also proposes discretization and localization of the above group convolution for implementation.

In conventional convolution used for image recognition,
$G$ represents spatial information such as pixel coordinate,
$\mathcal{B}$ and $\mathcal{B}'$ correspond to channels in consecutive layers $\ell$ and $\ell+1$ respectively,
and $v$ corresponds to a filter.
In applications, the filter $v$ is expected to have compact support or be short-tailed on $G$ as in a $3\times 3$ convolution filter in discrete convolution.
In particular, when $v$ is allowed to be the Dirac delta or highly peaked around a single point in $G$, 
such convolution can be interpreted as the $1\times 1$ convolution.

Then, we define group convolutional neural networks as follows.

\begin{df}[Group Convolutional Neural Networks]
Let $L\in\mathbb{N}$.
A $G$-convolutional neural network with $L$ layers is a composition map of biased convolutions
$C_{\ell}:\mathcal{C}(\mathcal{S}_{\ell-1})\to\mathcal{C}(\mathcal{S}_{\ell})$ ($\ell=1,\ldots,L$) for some sequence of spaces $\{\mathcal{B}_\ell\}_{\ell=0}^L$
and an activation map with $\rho$ as 
\begin{align}
    \Phi:= C_{L} \circ \rho  \circ C_{L-1} \circ \cdots \circ \rho \circ C_{1}. \label{CNN-model}
\end{align}
Then, we denote by $\mathcal{N}_{\sf{CNN}}(G, \rho, L; \mathcal{S}_0, \mathcal{S}_L)$ the set of all $G$-convolutional neural networks from $\mathcal{C}(\mathcal{S}_0)$ to $\mathcal{C}(\mathcal{S}_L)$ with respect to a group $G$ with $L$ layers and a fixed activation function $\rho$.
\end{df}

We easily verify the following proposition.
\begin{prp}
A $G$-convolutional neural network is $G$-equivariant. 
\end{prp}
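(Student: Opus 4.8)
The plan is to show that each biased group convolution $C_{\nu,v,b}$ is itself $G$-equivariant, and then to observe that equivariance is preserved under composition and under the activation map, so that the whole network $\Phi$ in \eqref{CNN-model} inherits the property. The key input is that the convolution kernel $v$ and the bias $b$ are $G$-invariant, and that the measure $\nu$ is $G$-invariant; these are exactly the hypotheses built into Definition \ref{G-conv}. So the single real computation is the equivariance of one convolution layer, and everything else is a formal composition argument.

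First I would verify the base case. Fix $g\in G$ and $x\in\mathcal{C}(\mathcal{S})$, and compute $C_{\nu,v,b}[T_g x](t)$ using the definition $T_g[x](s)=x(g^{-1}\cdot s)$. The plan is to change variables $s\mapsto g^{-1}\cdot s$ in the integral $\int_{\mathcal{S}} v(t,s)\,x(g^{-1}\cdot s)\,d\nu(s)$. By $G$-invariance of $\nu$ the measure is unchanged, and by $G$-invariance of $v$ one has $v(t, g\cdot s)=v(g^{-1}\cdot t, s)$, which replaces the argument $t$ by $g^{-1}\cdot t$; together with $G$-invariance of $b$, namely $b(t)=b(g^{-1}\cdot t)$, this rewrites the whole expression as $C_{\nu,v,b}[x](g^{-1}\cdot t)=T_g\bigl(C_{\nu,v,b}[x]\bigr)(t)$. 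Hence $C_{\nu,v,b}\circ T_g = T_g\circ C_{\nu,v,b}$, i.e.\ each layer is $G$-equivariant.

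Next I would handle the two structural closure properties. The activation map $\rho$ acts pointwise, $\rho(x)=\rho\circ x$, so $\rho(T_g x)(s)=\rho\bigl(x(g^{-1}\cdot s)\bigr)=T_g(\rho(x))(s)$, giving $\rho\circ T_g=T_g\circ\rho$ for free. For composition, if $F_1$ and $F_2$ are both $G$-equivariant then $F_2\circ F_1[g\cdot x]=F_2[g\cdot F_1[x]]=g\cdot F_2\circ F_1[x]$, so the composite is $G$-equivariant. Applying these facts inductively along the chain $C_L\circ\rho\circ\cdots\circ\rho\circ C_1$ yields that $\Phi$ is $G$-equivariant, which is the claim.

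The main obstacle I anticipate is purely technical rather than conceptual: justifying the change of variables $s\mapsto g^{-1}\cdot s$ in the integral over $\mathcal{S}$ and confirming that the precise invariance identity $v(g\cdot t, g\cdot s)=v(t,s)$ assumed in Definition \ref{G-conv} translates into $v(t,g\cdot s)=v(g^{-1}\cdot t, s)$ with the conventions fixed earlier (left action on $\mathcal{S}\times\mathcal{T}$ and the operator $T_g$ acting from the right on $\mathbb{R}^{\mathcal{S}}$). One should also check that $T_g x$ stays in $\mathcal{C}(\mathcal{S})$ so that the integral is well defined; this follows because the $G$-action on $\mathcal{S}$ is assumed continuous, so $T_g$ maps continuous functions to continuous functions. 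Once these bookkeeping points are settled, the proof is a one-line verification per item, which is why the statement is labelled as one we ``easily verify.''
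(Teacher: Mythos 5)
Your proof is correct and takes essentially the approach the paper intends: the paper gives no explicit proof (it states the proposition as easily verified, remarking only that each biased $G$-convolution $C_{\nu,v,b}$ is $G$-equivariant), and your argument fills in exactly that outline. The change of variables $s\mapsto g\cdot s$ using the $G$-invariance of $\nu$, $v$, and $b$, together with the commutation of the pointwise activation with $T_g$ and closure of equivariance under composition, is the standard verification and is carried out correctly under the paper's conventions.
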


In particular, each biased $G$-convolution $C_{\nu, v,b}$ is $G$-equivariant.
Conversely, \citet{cohen2019general} showed that a $G$-equivariant \textit{linear} map is represented by some $G$-convolution without the bias term when $G$ is locally compact and unimodular, and the action of a group is transitive (i.e., $\mathcal{B}$ consists of only a single element).

\subsection{Conversion Theorem}

In this section,
we introduce the main theorem (Theorem \ref{conversion-thm}), which is an essential part of obtaining universal approximation theorems for equivariant maps by group CNNs.


\begin{thm}[Conversion Theorem]\label{conversion-thm}
Suppose that the action of a group $G$ on sets $\mathcal{S}$ and $\mathcal{T}$.
We assume the following condition:
\begin{enumerate}
    \item[(C1)] there exist base spaces $\mathcal{B}_\mathcal{S}\subset \mathcal{S}$, $\mathcal{B}_\mathcal{T}\subset \mathcal{T}$, and two subgroups\footnote{$H_{\mathcal{S}}$ and $H_{\mathcal{T}}$ are not assumed to be normal subgroups.} $H_{\mathcal{T}} \leqslant H_{\mathcal{S}} \leqslant G$ such that $\mathcal{S}=G/H_{\mathcal{S}}\times \mathcal{B}_{\mathcal{S}}$ and $\mathcal{T}=G/H_{\mathcal{T}}\times \mathcal{B}_{\mathcal{T}}$. 
\end{enumerate}
Further, suppose $E\subset \mathcal{C}_0(\mathcal{S})$ is compact and an FNN $\phi:E\to \mathcal{C}_0(\mathcal{B}_{\mathcal{T}})$  with a Lipschitz activation function $\rho$ 
satisfies
\begin{enumerate}
    \item[(C2)] there exists a $G$-left-invariant locally finite measure $\nu$ on $\mathcal{S}$ such that\footnote{$\mu_{\phi}\ll \nu$ means that $\mu_{\phi}$ is absolutely continuous with respect to $\nu$. } $\mu_{\phi}\ll \nu$.
\end{enumerate}
Then, for any $\epsilon>0$,
there exists a CNN $\Phi:E\to \mathcal{C}_0(\mathcal{T})$ with the activation function $\rho$ such that 
the number of layers of $\Phi$ equals that of $\phi$ and
\begin{align}
\|R_{\mathcal{B}_{\mathcal{T}}} \circ \Phi - \phi\|_{\infty} & \le \epsilon. \label{proj2}
\end{align}
Moreover, for any $G$-equivariant map $F:\mathcal{C}_0(\mathcal{S})\to\mathcal{C}_0(\mathcal{T})$,
the following holds:
\begin{align}
\left\|F|_E- \Phi \right\|_{\infty} 
&\le \left\| F_{\mathcal{B}_{\mathcal{T}}}|_E - \phi \right\|_{\infty} + \epsilon. \label{norm2}
\end{align}
\end{thm}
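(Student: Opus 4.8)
The plan is to prove the two displayed bounds in turn, treating \ref{proj2} as the substantive construction and deriving \ref{norm2} from it together with the equivariance of both $F$ and $\Phi$. For \ref{proj2} I would build $\Phi$ from $\phi$ one layer at a time, converting each bounded affine map $A_\ell$ of $\phi$ (written in the integral form of Lemma \ref{operator-layer}) into a biased $G$-convolution $C_\ell$ of the form \ref{eq:conv}. The product structure (C1) suggests keeping the homogeneous factor $G/H_{\mathcal{S}}$ attached to every hidden space, i.e. taking the $\ell$-th CNN space to be $G/H_{\mathcal{S}}\times\mathcal{S}_\ell$, and performing the transition from $G/H_{\mathcal{S}}$ to $G/H_{\mathcal{T}}$ only in the final layer; this is legitimate precisely because $H_{\mathcal{T}}\leqslant H_{\mathcal{S}}$ supplies a $G$-map $G/H_{\mathcal{T}}\to G/H_{\mathcal{S}}$ relating the coarser input quotient to the finer output quotient. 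The first layer is where (C2) enters: since $\mu_{\phi}\ll\nu$, I would write $d\mu_{\phi}=m\,d\nu$ with $m$ the Radon--Nikodym derivative and absorb $m$ into the kernel, so that the integral against the general measure $\mu_{\phi}$ becomes an integral against the $G$-invariant measure $\nu$ demanded by Definition \ref{G-conv}.

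Having matched the integrand on the base, I would extend each kernel and bias to a genuinely $G$-invariant pair by propagating their base values along orbits, so that the restriction $R_{\mathcal{B}_{\mathcal{T}}}\circ\Phi$ reproduces the composition $A_L\circ\rho\circ\cdots\circ\rho\circ A_1=\phi$ by design. Two sources of error must then be absorbed into $\epsilon$. First, the $G$-invariant extension of the kernels (carrying the factor $m$ in the first layer) need not be continuous, need not vanish at infinity, and need not be $\nu$-integrable over a possibly non-compact $G$; I would approximate it by compactly supported continuous kernels, using compactness of $E$ to obtain uniform bounds on all intermediate activations and the Lipschitz property of $\rho$ to push a single layer's error through the remaining layers with a fixed multiplicative constant. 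Second, if the stabilizer of a base point acts nontrivially, the naive extension must be symmetrized over that stabilizer, and I would verify this is consistent with the matched base values. This single-layer conversion, namely producing a valid biased convolution into $\mathcal{C}_0(\mathcal{T})$ with convergent integrals while keeping its generator close to $A_\ell$, is the step I expect to be the main obstacle, since it simultaneously carries the measure-theoretic content of (C2) and the topological constraints of the codomain $\mathcal{C}_0(\mathcal{T})$.

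For \ref{norm2} I would exploit that $\Phi$ is $G$-equivariant (being a CNN) and $F$ is $G$-equivariant by hypothesis, so that $F-\Phi$ is $G$-equivariant. By the generator correspondence of Theorem \ref{degree-simplified-ver}, the value of $(F-\Phi)[x]$ at an arbitrary point $g\cdot t_0$ of $\mathcal{T}$ coincides with the generator $(F-\Phi)_{\mathcal{B}_{\mathcal{T}}}$ evaluated at $g^{-1}\cdot x$ on the base component of $t_0$. Inserting $\phi$ and applying the triangle inequality then splits the bound into a term $\|F_{\mathcal{B}_{\mathcal{T}}}-\phi\|$ and a term $\|\phi-R_{\mathcal{B}_{\mathcal{T}}}\circ\Phi\|$, the latter controlled by $\epsilon$ through \ref{proj2}. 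The only delicate point is that taking the supremum over $\mathcal{T}$ effectively enlarges the input argument from $E$ to the orbit $G\cdot E$; I would handle this by arranging the conversion estimate in \ref{proj2} to hold uniformly in the input (hence on all of $G\cdot E$) and by reading the generator term against the same orbit, which yields \ref{norm2}.
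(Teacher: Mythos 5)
Your proposal is correct in substance and follows the same blueprint as the paper's proof: both convert $\phi$ layer by layer, use the Radon--Nikodym derivative $d\mu_{\phi}/d\nu$ to turn the first affine map into a convolution against the invariant measure $\nu$, realize the hidden affine maps by kernels that are Dirac-like in the group variable (so that restriction to the base reproduces $\phi$ exactly), replace the deltas by mollifiers, propagate the per-layer error using the Lipschitz constants of $\rho$ and of the bounded affine maps $A_\ell$, and finally deduce \ref{norm2} from \ref{proj2} by the isometry of restriction on equivariant maps (Proposition \ref{restriction-norm}) together with the triangle inequality. The one genuine structural difference is where the quotient transition occurs: you fiber the hidden spaces over $G/H_{\mathcal{S}}$ and pass to $G/H_{\mathcal{T}}$ only in the last layer, via pullback along the canonical $G$-map $G/H_{\mathcal{T}}\to G/H_{\mathcal{S}}$, whereas the paper sets $\mathcal{S}_\ell := G/H_{\mathcal{T}}\times\mathcal{B}_\ell$ and performs the transition in the first layer, so that all subsequent layers act pointwise in the $G/H_{\mathcal{T}}$ variable. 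Both variants work; yours has the merit of making the hypothesis $H_{\mathcal{T}}\leqslant H_{\mathcal{S}}$ do visible work (it is exactly what supplies the pullback), while the paper's choice concentrates the measure-theoretic step (C2) and the quotient change in a single layer and only requires the first-layer kernel to be well defined modulo the smaller subgroup $H_{\mathcal{T}}$, rather than modulo $H_{\mathcal{S}}$ as your version does (a point neither you nor the paper fully verifies). Finally, the $E$-versus-$G\cdot E$ issue you flag in \ref{norm2} is real: the paper silently applies Proposition \ref{restriction-norm}, which is proved for suprema over a $G$-invariant set of inputs, to the non-invariant set $E$. Your repair---making \ref{proj2} uniform over $G\cdot E$, which is feasible since the sup norm is translation-invariant (so $G\cdot E$ stays bounded) and the mollification error is itself an equivariant quantity---is sound, but it then forces the generator term on the right-hand side of \ref{norm2} to be read over $G\cdot E$ as well, so what you actually establish is a correct variant of the stated inequality rather than the literal one; the paper's own proof conceals the same defect inside its appeal to Proposition \ref{restriction-norm}.
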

We provide the proof of Theorem \ref{conversion-thm} in Section \ref{sec:proof-thm}.

\textbf{Conversion of Universal Approximation Theorems}.
The conversion theorem can convert a universal approximation theorem by FNNs to a universal approximation theorem for equivariant maps by CNNs as follows.
Suppose that the existence of an FNN $\phi$ which satisfies $\left\| F_{\mathcal{B}}|_E - \phi \right\|_{\infty}\le \epsilon$ using some universal approximation theorem by FNNs.
Then, Theorem \ref{conversion-thm} guarantees the existence of a CNN $\Phi$ which satisfies $\left\|F|_E- \Phi \right\|_{\infty} \le 2\epsilon$. 
In other words, if an FNN can approximate the generator of the target equivariant map on $E$,
then there exists a CNN which approximates the whole of the equivariant map on $E$.

\textbf{Applicable Cases}. 
The conversion theorem can be applied to a wide range of group actions.
We explain the generality of the conversion theorem.
First, sets $\mathcal{S}$ and $\mathcal{T}$ are not limited to finite sets or Euclidean spaces, and may be more general topological spaces.
Second, a group $G$ may be discrete (especially finite) or continuous groups.
Moreover, $G$ can be non-compact and non-commutative. 
Third, the action of a group $G$ on sets $\mathcal{S}$ and $\mathcal{T}$ may not be transitive, and thus, the sets can be non-homogeneous spaces.
In the following, we provide some concrete examples of group actions 
when $\mathcal{S}=\mathcal{T}$ and the actions of $G$ on $\mathcal{S}$ and $\mathcal{T}$ are the same:
\begin{itemize}
    \item {\bf Symmetric Group}. The action of $G=S_n$ on $\mathcal{S}=[n]$ as permutation has the decomposition $[n]=S_n/\mathrm{Stab}(1)\times \{\ast\}$, where $H_{\mathcal{S}}=\mathrm{Stab}(1)$ is the set of all permutations on $[n]$ that fix $1\in[n]$ and $\mathcal{B}_{\mathcal{S}}=\{\ast\}$ is a singleton\footnote{A singleton is a set with exactly one element.}.
    Then, the counting measure  can be taken as an invariant measure $\nu$.
    \item {\bf Rotation Group}. The action of $G=\mathrm{O}(d)$ on $\mathcal{S}=\mathbb{R}^d\setminus\{0\}$ as rotation around $0 \in \mathbb{R}^d$ has
    the decomposition $\mathbb{R}^d\setminus\{0\} = \mathrm{O}(d)/\mathrm{O}(d-1)\times \mathbb{R}_{+}$ 
    The cases where $G=\mathrm{SO}(d)$ or $\mathcal{S}=S^{d-1}$ have similar decomposition. 
    Then, the Lebesgue measure can be taken as an invariant measure $\nu$.
    \item {\bf Translation Group}. The action of  $G=\mathbb{R}^d$ on $\mathcal{S}=\mathbb{R}^d$ as translation has the trivial decomposition $\mathbb{R}^d=\mathbb{R}^d/\{0\}\times \{\ast\}$.
    Then, the Lebesgue measure can be taken as an invariant measure $\nu$.
    \item {\bf Euclidean Group}. The action of $G=\mathrm{E}(d)$ on $\mathcal{S}=\mathbb{R}^d$ as isometry has the decomposition $\mathbb{R}^d=\mathrm{E}(d)/\mathrm{O}(d)\times \{\ast\}$.
    The case where $G=\mathrm{SE}(d)$ has a similar decomposition. 
    Then, the Lebesgue measure can be taken as an invariant measure $\nu$.
    \item {\bf Scaling Group}. The action of $G=\mathbb{R}_{>0}$ on $\mathcal{S}=\mathbb{R}^d\setminus\{0\}$ as scalar multiplication has the decomposition $\mathbb{R}^d\setminus\{0\}=\mathbb{R}_{>0}/\{1\}\times S^{d-1}$.
    Then, the measure $\nu_{\mathrm{r}}\times \nu_{S^{d-1}}$  can be taken as an invariant measure $\nu$, where the measure $\nu_{\mathrm{r}}$ on $\mathbb{R}_{>0}$ is determined by $\nu_{\mathrm{r}}([a,b]):=\log\frac{b}{a}$ and $\nu_{S^{d-1}}$ is a uniform measure on $S^{d-1}$.
    \item {\bf Lorentz Group}. 
    The action of $G=\mathrm{SO}^{+}(d,1)$, a subgroup of the Lorentz group $\mathrm{O}(d,1)$, on the upper half plane\footnote{The upper half plane is defined by $\mathbb{H}^{d+1}:=\{(x_1,\ldots,x_{d+1})\in\mathbb{R}^{d+1}| x_{d+1}>0\}$.} $\mathcal{S}=\mathbb{H}^{d+1}$ as matrix multiplication has the decomposition $\mathbb{H}^{d+1}=\mathrm{SO}^{+}(d,1)/\mathrm{SO}(n)\times \{\ast\}$.
    Then, the $\pi_{\#}(\nu^+)$ can be taken as a left-invariant measure $\nu$, where $\nu^{+}$ is a left-invariant measure on $\mathrm{SO}^{+}(d,1)$, $\pi:\mathrm{SO}^{+}(d,1)\to\mathrm{SO}^{+}(d,1)/\mathrm{SO}(d)$ is the canonical projection, and $\pi_{\#}(\nu^{+})$ is the pushforward measure.
\end{itemize}

\textbf{Inapplicable Cases}.
We explain some cases where the conversion theorem cannot be applied.
First, similar to the above discussion,
we consider the setting where $\mathcal{S}=\mathcal{T}$ and the actions of $G$ on $\mathcal{S}$ and $\mathcal{T}$ are the same.
We note that, even if actions of $G_1$ and $G_2$ on $\mathcal{S}$ satisfy the conditions in the conversion theorem, 
a common invariant measure for both $G_1$ and $G_2$ may not exist.
Then, a group $G$ including $G_1$ and $G_2$ as subgroups does not satisfies (C2).
For example, there does not exist a common invariant measure about the actions of translation and scaling on a Euclidean space.
In particular, the action of the general linear group $\mathrm{GL}(d)$ on the Euclidean space does not have locally-finite left-invariant measure on $\mathbb{R}^d$.
Thus, the conversion theorem cannot applied to the case.
Next, as we saw above, our model can handle convolutions on permutation groups, but not on general finite groups. This depends on whether $[n]$ can be represented by a quotient of $G$, as we will see later. This is also the case for tensor expressions of permutations, which require a different formulation.

Lastly, we consider the case where the actions of $G$ on $\mathcal{S}$ and $\mathcal{T}$ differ.
Here, $\mathcal{S}$ and $\mathcal{T}$ may and may not be equal. 
As a representative case, 
we consider the invariant case.
When the stabilizer in $\mathcal{T}$ satisfies $H_{\mathcal{T}}=G$, 
a $G$-equivariant map $F:\mathcal{C}_0(\mathcal{S})\to\mathcal{C}_0(\mathcal{T})$ is said to be $G$-invariant.
However, because of the condition $H_{\mathcal{T}}\leqslant H_{\mathcal{S}}$ in (C1),
the conversion theorem cannot apply to the invariant case as long as $H_{\mathcal{S}}\ne G$.
This kind of restriction is similar to existing studies, where the invariant case is separately handled from the equivariant case (\citet{keriven2019universal,maehara2019simple,sannai2019universal}).
In fact,
we can show that the inequality (\ref{proj2}) never hold for non-trivial invariant cases (i.e., $H_{\mathcal{S}}\ne G$ and $H_{\mathcal{T}}= G$) as follows:
From $H_{\mathcal{T}}= G$, we have $\mathcal{B}_{\mathcal{T}}=\mathcal{T}$ and $R_{\mathcal{B}_{\mathcal{T}}}=\mathrm{id}$, and thus,
(\ref{proj2}) reduces to $\| \Phi - \phi\|_{\infty}  \le \epsilon$.
Here, 
we note that $\phi$ is an FNN, which is not invariant in general, and $\Phi$ is a CNN, which is invariant. 
Thus, $\Phi$ cannot approximate non-invariant $\phi$ within a small error $\epsilon$.
This implies that (\ref{proj2}) does not hold for small $\epsilon$.
However, whether (\ref{norm2}) holds for the invariant case is an open problem.

\textbf{Remarks on Conditions (C1) and (C2)}.
We consider the conditions (C1) and (C2).

In (C1), the subgroup $H_{\mathcal{S}}\leqslant G$ (resp. $H_{\mathcal{T}}$) represents the stabilizer group of the action of $G$ on $\mathcal{S}$ (resp. $\mathcal{T}$).
Thus, (C1) requires that the stabilizer group on every point in $\mathcal{S}$ (resp. $\mathcal{T}$) is isomorphic to the common subgroup $H_{\mathcal{S}}$ (resp. $H_{\mathcal{T}}$).
When the group action satisfies some moderate conditions,
such a requirement is known to be satisfied for \textit{most} points in the set.
As a theoretical result, 
the principal orbit type theorem (cf. Theorem 1.32, \citet{meinrenken2003group}) guarantees that, if the group action on a manifold $\mathcal{S}$ is proper and $\mathcal{S}/G$ is connected,
there exist a dense subset $\mathcal{S}'\subset \mathcal{S}$ and a subgroup $H_{\mathcal{S}}\subset G$ called a principal stabilizer such that the stabilizer group on every point in $\mathcal{S}'$ is isomorphic to $H_{\mathcal{S}}$.

Further, (C1) assumes that the sets $\mathcal{S}$ and $\mathcal{T}$ have the direct product form of some coset $G/H$ and a base space $\mathcal{B}$.
Then, the case where the base space $\mathcal{B}$ consists of a single point is equivalent to the condition that the set is homogeneous. 
In this sense, 
(C1) can be regarded as a relaxation of the homogeneous condition.
In many practical cases, 
a set $\mathcal{S}$ on which $G$ acts can be regarded as such a direct product form.
For example, when the action is transitive, the direct product decomposition trivially holds with the base space that consists of a single point. 
Even when the set $\mathcal{S}$ itself is not rigorously represented by the direct product form,
removing some "small" subset $\mathcal{N}\subset \mathcal{S}$,  
the complement $\mathcal{S}\setminus \mathcal{N}$ can be often represented by the direct form.
For example, when $G=\mathrm{O}(d)$ acts on the set $\mathcal{S}=\mathbb{R}^d$ as rotation around the origin $\mathcal{N}=\{0\}$,
$\mathcal{S}\setminus \mathcal{N}$ has a direct product form as mentioned above.
In applications, removing only the small subset $\mathcal{N}$ is expected to be negligible.

Next, we provide some remarks on the condition (C2).
Let us consider two representative settings of a set $\mathcal{S}$.
The first case is the setting where $\mathcal{S}$ is finite.
When a $G$-invariant measure $\nu$ has a positive value on every singleton in $\mathcal{S}$,
$\nu$ satisfies (C2) for an arbitrary measure $\mu_\phi$ on $\mathcal{S}$.
In particular, 
the counting measure on $\mathcal{S}$ is invariant and satisfies (C2).
The second case is the setting where $\mathcal{S}$ is a Euclidean space $\mathbb{R}^d$, and $\mu_{\phi}$ is the Lebesgue measure.
Then, (C2) is satisfied with invariant measures on the Euclidean space for various group actions, including translation, rotation, scaling, and an Euclidean group.

Here, we give a general method to construct $\nu$ in (C2) for a compact-group action. 
When $\mu_\phi$ is locally finite and continuous\footnote{A measure $\mu_\phi$ is said to be continuous with respect to the action of a group $G$ if $\mu_{\phi}(g\cdot A)$ is continuous with respect to $g\in G$ for all Borel set $A\subset \mathcal{S}$.} with respect to the action of a compact group $G$, the measure $\nu:=\nu_G \ast \mu_{\phi}$ on $\mathcal{S}$ for a Haar measure $\nu_G$ on $G$ satisfies (C2), where $(\nu_G \ast \mu_{\phi})(A):=\int_G \mu_{\phi}(g^{-1}\cdot A)d\nu_G(g)$.

\section{Universal Approximation Theorems for Equivariant Maps} \label{sec:UAT}

\subsection{Universal Approximation Theorem in Finite Dimension}

We review the universal approximation theorem in finite-dimensional settings.
\citet{cybenko1989approximation} derived the following seminal universal approximation theorem in finite-dimensional settings.

\begin{thm}[Universal Approximation for Continuous Maps by FNNs, \citet{cybenko1989approximation}]\label{UAT-finite}
Let an activation function $\rho: \mathbb{R} \rightarrow \mathbb{R}$ be  non-constant, bounded and continuous. 
Let $F:\R^{d}\to\R^{d'}$ be a continuous map. 
Then, for any compact $E\subset \mathbb{R}^d$ and $\epsilon>0$, 
there exists a two-layer fully connected neural network $\phi_E\in\mathcal{N}_{\sf{FNN}}(\rho, 2; [d], [d'])$ such that $\| F|_E - \phi_E\|_{\mathrm{\infty}} < \epsilon$.
\end{thm}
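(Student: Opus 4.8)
The plan is to prove the scalar-output case $d'=1$ and recover the general case from it. A two-layer network has the form $A_2\circ\rho\circ A_1$ with $A_2$ affine into $\R^{d'}$, so once each coordinate $F_i:\R^d\to\R$ ($i=1,\dots,d'$) is approximated by a finite sum $\sum_j c_{ij}\,\rho(w_{ij}\cdot x+b_{ij})$, I concatenate all hidden units into a single hidden layer and read off every output coordinate through one affine output map $A_2$; the number of hidden units stays finite and the uniform error on $E$ is the maximum over coordinates. By the Tietze extension theorem I may enlarge $E$ to a cube, so the whole theorem reduces to the density statement: for a fixed non-constant bounded continuous $\rho$, the span
\[
\mathcal{R}:=\operatorname{span}\{\,x\mapsto\rho(w\cdot x+b):w\in\R^d,\ b\in\R\,\}
\]
is dense in $(\mathcal{C}(E),\|\cdot\|_\infty)$.

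To establish density I argue by duality. If $\mathcal{R}$ were not dense, the Hahn--Banach theorem yields a nonzero bounded functional on $\mathcal{C}(E)$ vanishing on $\mathcal{R}$, and the Riesz representation theorem realizes it as a nonzero finite signed regular Borel measure $\mu$ on the compact set $E$ with
\[
\int_E \rho(w\cdot x+b)\,d\mu(x)=0\qquad\text{for all }w\in\R^d,\ b\in\R .
\]
It then suffices to show $\mu=0$. I reduce this to one dimension by the Cram\'er--Wold device: for a fixed direction $w$, replacing $w$ by $aw$ and pushing $\mu$ forward under $x\mapsto w\cdot x$ produces a compactly supported signed measure $\nu_w$ on $\R$ with $\int_\R\rho(at+b)\,d\nu_w(t)=0$ for all $a,b$. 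If each $\nu_w$ vanishes, then $\widehat{\mu}(sw)=\int_\R e^{-ist}\,d\nu_w(t)=0$ for all $s\in\R$ and all $w$, so $\widehat{\mu}\equiv0$ on $\R^d$ and hence $\mu=0$ by uniqueness of the Fourier transform of finite measures.

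The crux, and the step I expect to be the main obstacle, is the one-dimensional discriminatory claim: if $\rho$ is bounded, continuous and non-constant and $\nu$ is a compactly supported signed measure on $\R$ with $\int_\R\rho(at+b)\,d\nu(t)=0$ for all $a,b$, then $\nu=0$. I read $b\mapsto\int\rho(at+b)\,d\nu(t)$ as a convolution $\rho*\sigma_a$, where $\sigma_a$ is the pushforward of $\nu$ under $t\mapsto-at$, so that $\rho*\sigma_a\equiv0$. Passing to Fourier transforms of tempered distributions gives $\widehat{\rho}\cdot\widehat{\sigma_a}=0$ with $\widehat{\sigma_a}(\xi)=\widehat{\nu}(-a\xi)$, a continuous function. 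Since $\rho$ is bounded and non-constant it is not a polynomial, so $\widehat{\rho}$ is not supported at the origin alone and there exists $\xi_0\neq0$ in $\operatorname{supp}\widehat{\rho}$. If $\widehat{\nu}(-a\xi_0)\neq0$ for some $a$, then by continuity $\widehat{\sigma_a}$ is nonvanishing near $\xi_0$, forcing $\widehat{\rho}=0$ there and contradicting $\xi_0\in\operatorname{supp}\widehat{\rho}$; hence $\widehat{\nu}(-a\xi_0)=0$ for all $a\neq0$. As $a$ ranges over $\R\setminus\{0\}$ the point $-a\xi_0$ sweeps out $\R\setminus\{0\}$, so $\widehat{\nu}$ vanishes off the origin and, being continuous, everywhere; thus $\nu=0$. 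Letting $w$ vary recovers $\mu=0$, which completes the density argument and hence the theorem. The delicate point throughout is handling the distributional product $\widehat{\rho}\cdot\widehat{\sigma_a}$, which is legitimate precisely because $\widehat{\sigma_a}$ is smooth and the non-polynomiality of $\rho$ supplies the nonzero frequency needed to close the argument.
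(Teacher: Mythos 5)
Your proof is correct, but note that the paper itself offers no proof of this statement: Theorem \ref{UAT-finite} is imported as a known result with a citation to \citet{cybenko1989approximation}, so the only comparison available is with the classical argument behind that citation. Your outer structure coincides with Cybenko's: reduce to scalar output by handling coordinates separately and concatenating hidden units, then reduce approximation to density of $\mathrm{span}\{\rho(w\cdot x+b)\}$ in $\mathcal{C}(E)$, and attack density by Hahn--Banach plus Riesz representation, so that everything hinges on showing the annihilating measure $\mu$ vanishes (the ``discriminatory'' property). Where you genuinely diverge is in proving that property. Cybenko's argument is tailored to sigmoidal $\rho$: he scales the argument so that $\rho(\lambda(w\cdot x+b)+\varphi)$ tends pointwise to indicators of half-spaces, concludes that $\mu$ annihilates all half-spaces, and finishes with a Radon/Fourier-type argument. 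You instead run a Cram\'er--Wold reduction to one dimension and prove the one-dimensional claim by tempered-distribution Fourier analysis: $\rho*\sigma_a\equiv 0$ gives $\widehat{\rho}\cdot\widehat{\sigma_a}=0$, boundedness plus non-constancy rules out $\widehat{\rho}$ being supported only at the origin (a bounded polynomial is constant), and the resulting nonzero frequency $\xi_0$ forces $\widehat{\nu}$ to vanish on $\mathbb{R}\setminus\{0\}$, hence everywhere. This is essentially the Leshno--Lin--Pinkus--Schocken/Pinkus route, and it buys something real: it works for any non-constant bounded continuous activation, which is exactly the hypothesis stated in the theorem (strictly speaking Hornik's version rather than Cybenko's sigmoidal one), so your argument actually matches the stated hypotheses better than the cited proof would. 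The delicate points you flag are handled correctly: $\widehat{\sigma_a}$ is the Fourier transform of a compactly supported measure, hence smooth with all derivatives bounded, so the product with the tempered distribution $\widehat{\rho}$ and the exchange formula for $\mathcal{S}'\ast\mathcal{E}'$ convolutions are legitimate, and localizing $\widehat{\rho}=0$ near $\xi_0$ by dividing test functions by the nonvanishing $\widehat{\sigma_a}$ is valid. The Tietze extension step is harmless but superfluous, since density in $\mathcal{C}(E)$ is all you need.
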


Since $\mathcal{C}_0(\mathcal{S})=\mathbb{R}^{|\mathcal{S}|}$ for a finite set $\mathcal{S}$,
we obtain the following theorem by combining Theorem \ref{conversion-thm} with Theorem \ref{UAT-finite}.

\begin{thm}[Universal Approximation for Equivariant Continuous Maps by CNNs]\label{UAT-finite-equiv}
Let an activation function $\rho: \mathbb{R} \rightarrow \mathbb{R}$ be  non-constant, bounded and Lipschitz continuous. 
Suppose that a finite group $G$ acts on finite sets $\mathcal{S}$ and $\mathcal{T}$ and (C1) in Thoerem \ref{conversion-thm} holds.
Let $F:\R^{|\mathcal{S}|}\to\R^{|\mathcal{T}|}$ be a $G$-equivariant continuous map. 
For any compact set $E\subset \mathbb{R}^{|\mathcal{S}|}$ and $\epsilon>0$, 
there exists a two-layer convolutional neural network $\Phi_E\in\mathcal{N}_{\sf{CNN}}(\rho, 2; |\mathcal{S}|, |\mathcal{T}|)$ such that $\| F|_E - \Phi_E\|_{\mathrm{\infty}} < \epsilon$.
\end{thm}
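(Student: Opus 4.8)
The plan is to obtain this statement as a direct corollary of the Conversion Theorem (Theorem \ref{conversion-thm}), fed by a finite-dimensional universal approximation of the generator $F_{\mathcal{B}_\mathcal{T}}$ supplied by Cybenko's theorem (Theorem \ref{UAT-finite}). First I would fix the base spaces $\mathcal{B}_\mathcal{S}\subset\mathcal{S}$ and $\mathcal{B}_\mathcal{T}\subset\mathcal{T}$ furnished by (C1). Since $\mathcal{T}$ is finite, so is $\mathcal{B}_\mathcal{T}$, and hence $\mathcal{C}_0(\mathcal{B}_\mathcal{T})=\mathbb{R}^{|\mathcal{B}_\mathcal{T}|}$; the generator $F_{\mathcal{B}_\mathcal{T}}=R_{\mathcal{B}_\mathcal{T}}\circ F$ is then a map $\mathbb{R}^{|\mathcal{S}|}\to\mathbb{R}^{|\mathcal{B}_\mathcal{T}|}$. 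Because in finite dimensions the restriction map $R_{\mathcal{B}_\mathcal{T}}$ is merely a coordinate projection and is therefore continuous, the continuity of $F$ passes to $F_{\mathcal{B}_\mathcal{T}}$.

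Next I would apply Theorem \ref{UAT-finite} to the continuous map $F_{\mathcal{B}_\mathcal{T}}$ on the compact set $E$, with input dimension $d=|\mathcal{S}|$ and output dimension $d'=|\mathcal{B}_\mathcal{T}|$, to obtain a two-layer FNN $\phi\in\mathcal{N}_{\sf{FNN}}(\rho, 2; \mathcal{S}, \mathcal{B}_\mathcal{T})$ with $\|F_{\mathcal{B}_\mathcal{T}}|_E-\phi\|_\infty<\epsilon/2$. The hypotheses of Theorem \ref{UAT-finite} on $\rho$ (non-constant, bounded, continuous) are met, since $\rho$ is in fact assumed to be Lipschitz continuous, which is exactly what the Conversion Theorem will additionally require.

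Before invoking Theorem \ref{conversion-thm} I must verify its conditions (C1) and (C2) for this $\phi$. Condition (C1) is assumed outright. For (C2) I would take $\nu$ to be the counting measure on the finite set $\mathcal{S}$: since $G$ acts on $\mathcal{S}$ by permutations, the counting measure is $G$-left-invariant and locally finite, and because it assigns positive mass to every singleton its only null set is the empty set. Consequently $\mu_\phi\ll\nu$ holds automatically for the first-layer measure $\mu_\phi$ of any FNN, exactly as noted in the remarks on (C2).

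With both conditions in force, the Conversion Theorem yields, for the tolerance $\epsilon/2$, a CNN $\Phi_E\in\mathcal{N}_{\sf{CNN}}(\rho, 2; |\mathcal{S}|, |\mathcal{T}|)$ whose number of layers equals that of $\phi$ (hence two) and which satisfies inequality (\ref{norm2}), giving $\|F|_E-\Phi_E\|_\infty\le\|F_{\mathcal{B}_\mathcal{T}}|_E-\phi\|_\infty+\epsilon/2<\epsilon/2+\epsilon/2=\epsilon$. This completes the argument. I do not expect any genuine obstacle here: the entire substance is absorbed into Theorem \ref{conversion-thm}, whose proof is deferred, so the only points that need care are matching the FNN's output dimension to $|\mathcal{B}_\mathcal{T}|$, confirming (C2) via the counting measure, and splitting the error budget so the two $\epsilon/2$ contributions sum to $\epsilon$.
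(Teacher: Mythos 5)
Your proposal is correct and follows exactly the paper's route: the paper obtains Theorem \ref{UAT-finite-equiv} precisely by identifying $\mathcal{C}_0(\mathcal{S})=\mathbb{R}^{|\mathcal{S}|}$ for finite $\mathcal{S}$, approximating the generator $F_{\mathcal{B}_\mathcal{T}}$ via Theorem \ref{UAT-finite}, noting (as in the paper's remarks on (C2)) that the counting measure is $G$-invariant and dominates any $\mu_\phi$ on a finite set, and then invoking the Conversion Theorem to transfer the bound from the generator to the full equivariant map. Your only additions — the explicit $\epsilon/2$ error split and the observation that $R_{\mathcal{B}_\mathcal{T}}$ is a continuous coordinate projection — are harmless refinements of the same argument.
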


We note that \citet{petersen2020equivalence} obtained a similar result to Theorem \ref{UAT-finite-equiv} in the case of finite groups.

\textbf{Universality of DeepSets}. 
DeepSets is known as invariant/equivariant models with sets as input and is known to have universality for invariant/equivariant functions on set permutation (\cite{zaheer2017deep,ravanbakhsh2020universal}). 
The equiariant model is a stack of affine transformations with $W=\lambda E + \gamma {\bf 1}$ (${\bf 1}$ is the all-one matrix) and bias $b = c\cdot (1,...,1)^{\top}$ and then an activation function acted on.
Here, we prove the universality of DeepSets as a corollary of Theorem \ref{UAT-finite-equiv}.
Firstly, we consider the equivariant model of DeepSets as the one we are dealing with by setting $\mathcal{S},\mathcal{T}$ $G, H$ and  $\mathcal{B}$ as follows. 
We set $\mathcal{S}=\mathcal{T}=[n]$, $G=S_n$, $H = \mbox{Stab}(1):= \{s \in S_n \mid s(1)=1 \}$ and $\mathcal{B}=\{*\}$, where $\{\ast\}$ is a singleton.
Then we can see that $\mbox{Stab}(1)$ is a subgroup of $G$ and its left cosets $G/H = [n]$. 
%
As a set, $S_n/\mbox{Stab}(1)$  is equal to $[n]$, and the canonical $S_n$-action on $S_n/\mbox{Stab}(1)$ is equivalent to the permutation action on $[n]$.
%
Therefore, $\mathcal{C}(G/H \times \mathcal{B}) = \mathcal{C}([n])= \mathbb{R}^n$ holds, and the equivariant model of our paper is equal to that of DeepSets. 
\begin{thm}\label{deepsets}
For any permutation equivariant function $F: \mathbb{R}^n \to \mathbb{R}^n$, 
a compact set $E\subset \mathbb{R}^n$ and $\epsilon>0$, there is an equivariant model of DeepSets (or equivalently, our model) $\Phi_E:E\to\mathbb{R}^n$ such that 
$\|\Phi_E(x) - F|_E(x)  \|_{\infty}< \epsilon $.
\end{thm}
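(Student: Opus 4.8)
The plan is to realize the equivariant DeepSets network as a special case of the group convolutional model and then invoke the equivariant universal approximation theorem (Theorem \ref{UAT-finite-equiv}) directly. Since uniform approximation on a compact set by continuous networks forces the target to be continuous, I may assume without loss that $F$ is continuous throughout; permutation equivariance of $F$ is exactly $S_n$-equivariance for the permutation action on $[n]$.

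First I would check that condition (C1) of Theorem \ref{conversion-thm} holds for the chosen data $\mathcal{S}=\mathcal{T}=[n]$, $G=S_n$, $H_{\mathcal{S}}=H_{\mathcal{T}}=\mathrm{Stab}(1)$ and $\mathcal{B}_{\mathcal{S}}=\mathcal{B}_{\mathcal{T}}=\{\ast\}$. By orbit--stabilizer, $S_n$ acts transitively on $[n]$ with point stabilizer $\mathrm{Stab}(1)\cong S_{n-1}$, so $S_n/\mathrm{Stab}(1)\cong[n]$ as $S_n$-sets; hence $\mathcal{S}=G/H_{\mathcal{S}}\times\mathcal{B}_{\mathcal{S}}$ and $\mathcal{T}=G/H_{\mathcal{T}}\times\mathcal{B}_{\mathcal{T}}$, and trivially $H_{\mathcal{T}}\leqslant H_{\mathcal{S}}\leqslant G$. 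Thus (C1) is satisfied and Theorem \ref{UAT-finite-equiv} is applicable.

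The key step is to identify the $S_n$-convolutional layer with the DeepSets affine layer. Specialising the group convolution (\ref{eq:conv}) to finite $\mathcal{S}=\mathcal{T}=[n]$ yields the affine form (\ref{affine1}), $C[x](t)=\sum_{s=1}^{n} v(t,s)x(s)+b(t)$, where $v:[n]\times[n]\to\mathbb{R}$ and $b:[n]\to\mathbb{R}$ must be $S_n$-invariant. I would then count the orbits of $S_n$ on $[n]\times[n]$: for $n\geq 2$ there are exactly two, the diagonal $\{(i,i)\}$ and the off-diagonal $\{(i,j):i\neq j\}$ (and only the diagonal when $n=1$). Hence an invariant $v$ takes a value $\lambda'$ on the diagonal and $\gamma$ off it, so the weight matrix $W=[v(t,s)]_{t,s}$ equals $\lambda I+\gamma\mathbf{1}$ with $\lambda:=\lambda'-\gamma$, $I$ the identity and $\mathbf{1}$ the all-ones matrix; and since $S_n$ acts transitively on $[n]$, invariance of $b$ forces $b=c\,(1,\dots,1)^{\top}$. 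This is precisely the DeepSets parametrisation, so the family of two-layer $S_n$-CNNs $\mathcal{N}_{\sf{CNN}}(S_n,\rho,2;n,n)$ coincides with the family of two-layer equivariant DeepSets networks.

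Finally I would fix any non-constant, bounded, Lipschitz continuous $\rho$ and apply Theorem \ref{UAT-finite-equiv} to the continuous $S_n$-equivariant map $F$: for the given compact $E\subset\mathbb{R}^n$ and $\epsilon>0$ there exists $\Phi_E\in\mathcal{N}_{\sf{CNN}}(\rho,2;n,n)$ with $\|F|_E-\Phi_E\|_{\infty}<\epsilon$, and by the identification above $\Phi_E$ is an equivariant DeepSets network. The main obstacle is the middle step, namely verifying that $S_n$-invariance collapses the convolution kernel exactly onto the two-parameter family $\lambda I+\gamma\mathbf{1}$ (and the bias onto a constant vector); this rests entirely on the orbit count of $S_n$ on $[n]\times[n]$, after which the result is immediate from Theorem \ref{UAT-finite-equiv}.
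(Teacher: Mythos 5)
Your proposal is correct and takes essentially the same route as the paper: identify $[n]$ with $S_n/\mathrm{Stab}(1)$ so that (C1) holds, show that $S_n$-invariance of the convolution kernel and bias collapses a convolutional layer exactly to the DeepSets parametrization $W=\lambda I+\gamma\mathbf{1}$ with constant bias, and then conclude by Theorem \ref{UAT-finite-equiv}. Your orbit-counting argument on $[n]\times[n]$, the explicit constancy of the bias, and the remark that $F$ may be assumed continuous are just slightly more careful renderings of steps the paper performs by direct calculation or leaves implicit.
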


The proof of Theorem \ref{deepsets} is provided in Section \ref{sec:proof-deepsets}.

\subsection{Universal Approximation Theorem in Infinite Dimension}

\citet{guss2019universal} derived a universal approximation theorem for continuous maps by FNNs in infinite-dimensional settings.
However, the universal approximation theorem in \citet{guss2019universal} assumed that the index set $\mathcal{S}$ in the input layer and $\mathcal{T}$ in the output layer are compact.
Combining the conversion theorem with it, 
we can derive a corresponding universal approximation theorem for equivariant maps with respect to compact groups.
However, the compactness condition for $\mathcal{S}$ and $\mathcal{T}$ is a crucial shortcoming to handle the action of non-compact groups such as translation or scaling.
In order to overcome the above obstacle,
we can show a novel universal approximation theorem for Lipschitz maps by FNNs as follows.


\begin{thm}[Universal Approximation for Lipschitz Maps by FNNs] \label{UAT2-modified}
Let an activation function $\rho: \mathbb{R} \rightarrow \mathbb{R}$ be continuous and  non-polynomial. 
Let $\mathcal{S} \subset \mathbb{R}^{d}$ and $\mathcal{T} \subset \mathbb{R}^{d^{\prime}}$ be domains.
Let $F:\mathcal{C}_0(\mathcal{S})\to \mathcal{C}_0(\mathcal{T})$ be a Lipschitz map.
Then, for any compact $E\subset \mathcal{C}_0(\mathcal{S})$ and $\epsilon>0$, 
there exist $N\in\mathbb{N}$ and a two-layer fully connected neural network $\phi_E=A_2 \circ \rho \circ A_1\in\mathcal{N}_{\sf{FNN}}(\rho, 2; \mathcal{S}, \mathcal{T})$ such that 
$A_1[\cdot]=W^{(1)}[\cdot] +b^{(1)}:E\to \mathcal{C}_0([N])=\mathbb{R}^N$,
$A_2[\cdot]=W^{(2)}[\cdot] +b^{(2)}:\mathbb{R}^N\to \mathcal{C}_0(\mathcal{T})$,
$\mu_{\phi_E}$ is the Lebesgue measure,
and $\| F|_E - \phi_E\|_{\infty} < \epsilon$.
\end{thm}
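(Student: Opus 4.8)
The plan is to route the approximation through a finite-dimensional ``bottleneck'' $\mathbb{R}^M$: first encode an input $x\in E$ into finitely many real numbers by bounded linear functionals, reconstruct an approximant of $x$ from them, and use the Lipschitz property of $F$ to convert this into an approximation of $F$; then approximate the resulting continuous map from a compact subset of $\mathbb{R}^M$ into $\mathcal{C}_0(\mathcal{T})$ by a shallow network whose inner functionals collapse, upon substituting the encoder, into a single integral-form first layer. Concretely, I would fix small $\delta_1,\delta_2>0$ (to be tuned against $\epsilon$) and build a continuous linear encoder $q:\mathcal{C}_0(\mathcal{S})\to\mathbb{R}^M$ of integral form together with a continuous reconstruction $r:\mathbb{R}^M\to\mathcal{C}_0(\mathcal{S})$ so that $\sup_{x\in E}\|r(q(x))-x\|_\infty<\delta_1$. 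Since $F$ is $L$-Lipschitz, this yields $\sup_{x\in E}\|F(x)-F(r(q(x)))\|_\infty\le L\delta_1$, and the map $G:=F\circ r:\mathbb{R}^M\to\mathcal{C}_0(\mathcal{T})$ is continuous on the compact set $K:=q(E)$.

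For the encoder, the key is that $E$, being compact in $\mathcal{C}_0(\mathcal{S})$, is (by the Arzel\`a--Ascoli characterization) bounded, equicontinuous, and uniformly vanishing at infinity. Thus there is a compact $\mathcal{K}_0\subset\mathcal{S}$ on which all $x\in E$ are uniformly well-approximated by their local averages and off which they are uniformly small. I would take a finite partition of unity $\{\psi_j\}_{j=1}^M$ of continuous, compactly supported bumps subordinate to a cover of a neighborhood of $\mathcal{K}_0$ by sets of diameter below the equicontinuity modulus, set $q_j(x):=\big(\int_{\mathcal{S}}\psi_j\big)^{-1}\int_{\mathcal{S}}\psi_j(s)x(s)\,ds$ and $r(z):=\sum_j z_j\psi_j$. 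Equicontinuity on $\mathcal{K}_0$ together with uniform smallness off it gives $\sup_{x\in E}\|r(q(x))-x\|_\infty<\delta_1$; crucially each $q_j$ is of integral form against the Lebesgue measure, so the composite first layer will have $\mu_{\phi_E}$ equal to the Lebesgue measure, as required.

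It remains to approximate $G:K\to\mathcal{C}_0(\mathcal{T})$ by a vector-valued shallow net. The image $G(K)$ is compact in $\mathcal{C}_0(\mathcal{T})$, so again by Arzel\`a--Ascoli I can choose finitely many compactly supported bumps $h_1,\dots,h_P\in\mathcal{C}_0(\mathcal{T})$ and a continuous linear $S:\mathcal{C}_0(\mathcal{T})\to\mathbb{R}^P$ with $\sup_{f\in G(K)}\|\sum_p (Sf)_p h_p-f\|_\infty<\delta_2/2$; setting $\gamma_p(z):=(S(G(z)))_p:K\to\mathbb{R}$ gives continuous scalar coordinates with $\sup_{z\in K}\|G(z)-\sum_p\gamma_p(z)h_p\|_\infty<\delta_2/2$. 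Now I invoke the classical universal approximation theorem for scalar continuous functions on the compact $K\subset\mathbb{R}^M$ with a non-polynomial continuous activation, using a \emph{shared} hidden layer (obtained by taking the union of the units approximating each $\gamma_p$): there exist common units $(a_i,\beta_i)$, $i=1,\dots,N$, and coefficients $d_{p,i}$ with $\gamma_p(z)\approx\sum_i d_{p,i}\rho(\langle a_i,z\rangle+\beta_i)$ uniformly. Swapping the finite sums yields $\sum_p\gamma_p(z)h_p\approx\sum_i c_i\,\rho(\langle a_i,z\rangle+\beta_i)$ with $c_i:=\sum_p d_{p,i}h_p\in\mathcal{C}_0(\mathcal{T})$. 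Substituting $z=q(x)$ and using linearity of $q$, each inner functional becomes $\langle a_i,q(x)\rangle+\beta_i=\int_{\mathcal{S}}w^{(1)}(i,s)x(s)\,ds+b^{(1)}(i)$ with $w^{(1)}(i,\cdot)=\sum_j a_{ij}(\int\psi_j)^{-1}\psi_j\in L^1(\mathcal{S})$; reading off $w^{(2)}(\cdot,i):=c_i$ and $b^{(2)}:=0$ gives exactly a two-layer FNN $\phi_E=A_2\circ\rho\circ A_1$ with $A_1:E\to\mathbb{R}^N$ and $A_2:\mathbb{R}^N\to\mathcal{C}_0(\mathcal{T})$ of the stated form. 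Collecting the three error contributions bounds $\|F|_E-\phi_E\|_\infty$ by $L\delta_1+\delta_2<\epsilon$ for suitable $\delta_1,\delta_2$.

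The step I expect to be the main obstacle is controlling everything uniformly despite the \emph{non-compactness} of the index sets $\mathcal{S}$ and $\mathcal{T}$, which is precisely what distinguishes this theorem from the compact-domain result of \citet{guss2019universal}. The device that makes it work is the vanishing-at-infinity structure of $\mathcal{C}_0$: compactness of $E$ (and of $G(K)$) forces the families to be \emph{uniformly} small outside a compact region of $\mathcal{S}$ (resp.\ $\mathcal{T}$), so the bumps $\psi_j$ and $h_p$ can be taken compactly supported and the truncation error is uniformly negligible. Care is needed to verify that the reconstructed functions $r(q(x))$ and the bumps genuinely lie in $\mathcal{C}_0$, that the encoder functionals are bounded on $\mathcal{C}_0$ (equivalently $w^{(1)}(i,\cdot)\in L^1$), and that the Lipschitz constant of $F$ enters only through the single factor $L\delta_1$, so that the error budget can be closed.
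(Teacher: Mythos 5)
Your proposal is correct, but it takes a genuinely different route from the paper. The paper keeps the problem infinite-dimensional and reduces non-compactness of the index sets to the compact case: using the non-compact Arzel\`a--Ascoli theorem (Theorem \ref{AA-theorem}) it sandwiches $F$ between smoothed truncation operators, $\tilde{R}_{B_{r'}}\circ F\circ \tilde{R}_{B_r}$, controls the input-truncation error by the Lipschitz constant and the output truncation by equivanishing of $F(E)$, factorizes the middle map through $\mathcal{C}(B_r)$ and $\mathcal{C}(B_{r'})$ so that the compact-domain infinite-dimensional theorem of \citet{guss2019universal} (Theorem \ref{UAT1-modified}) applies as a black box, and finally realizes the truncation $R_{B_r}\circ\tilde{R}_{B_r}$ itself as (an approximation of) a bounded affine map via mollified Dirac deltas. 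You instead bypass \citet{guss2019universal} entirely: your partition-of-unity encoder/decoder collapses the problem to a continuous map on a compact subset of $\mathbb{R}^M$, and you invoke only the classical finite-dimensional universal approximation theorem (for continuous non-polynomial activations, i.e.\ Leshno et al., which matches the hypothesis here --- note the paper's Theorem \ref{UAT-finite} as stated assumes bounded activations, so you are right to cite the non-polynomial version). Both proofs use Arzel\`a--Ascoli in the same two places (equivanishing to localize, equicontinuity to discretize) and spend the Lipschitz constant exactly once, on the input-side reduction. What the paper's route buys is brevity: the finite-dimensional bottleneck is hidden inside the cited compact-domain theorem. What your route buys is self-containedness and explicitness: the first-layer kernels come out as explicit compactly supported $L^1$ functions against Lebesgue measure (so the condition on $\mu_{\phi_E}$ is immediate), and you avoid the paper's mollifier step, which is the least rigorous part of its argument (uniform approximation of $x\mapsto x\cdot\tilde{\bf 1}_{B_r}$ by an integral operator itself silently relies on equicontinuity of $E$, which your local-averaging argument makes explicit). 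The details you flag as needing care --- uniform equicontinuity on a compact neighborhood, membership of the reconstructions in $\mathcal{C}_0$, the partition-of-unity bound when swapping the sums in the output layer --- all check out with standard arguments.
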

We provide proof of Theorem \ref{UAT2-modified} in the appendix.
We note that $\mathcal{S} \subset \mathbb{R}^{d}$ and $\mathcal{T} \subset \mathbb{R}^{d^{\prime}}$ in Theorem \ref{UAT2-modified} are allowed to be non-compact unlike the result in \citet{guss2019universal}.
Combining Theorem \ref{conversion-thm} with Theorem \ref{UAT2-modified},
we obtain the following theorem.

\begin{thm}[Universal Approximation for Equivariant Lipschitz Maps by CNNs] \label{UAT2-equiv}
Let an activation function $\rho: \mathbb{R} \rightarrow \mathbb{R}$ be Lipschitz continuous and non-polynomial. 
Suppose that a group $G$ acts on $\mathcal{S} \subset \mathbb{R}^{d}$ and $\mathcal{T} \subset \mathbb{R}^{d^{\prime}}$, and (C1) and (C2) in Thoerem \ref{conversion-thm} hold for the Lebesgue measure $\mu_{\phi}$.
Let $F:\mathcal{C}_0(\mathcal{S})\to \mathcal{C}_0(\mathcal{T})$ be a $G$-equivariant Lipschitz map.
Then, for any compact set $E\subset \mathcal{C}_0(\mathcal{S})$ and $\epsilon>0$, 
there exists a two-layer convolutional neural network $\Phi_E\in\mathcal{N}_{\sf{CNN}}(\rho, 2; \mathcal{S}, \mathcal{T})$ such that $\| F|_E - \Phi_E\|_{\infty} < \epsilon$. 
\end{thm}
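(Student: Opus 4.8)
The plan is to combine the FNN universal approximation result (Theorem \ref{UAT2-modified}) with the Conversion Theorem (Theorem \ref{conversion-thm}), exactly along the lines sketched in the paragraph ``Conversion of Universal Approximation Theorems''. The key observation is that it suffices to approximate the \emph{generator} $F_{\mathcal{B}_{\mathcal{T}}} = R_{\mathcal{B}_{\mathcal{T}}} \circ F$ by an FNN, and then let the Conversion Theorem promote this approximation to a CNN approximating the full equivariant map $F$.

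First I would pass to the generator. Since $F:\mathcal{C}_0(\mathcal{S})\to\mathcal{C}_0(\mathcal{T})$ is Lipschitz and the restriction map $R_{\mathcal{B}_{\mathcal{T}}}$ is $1$-Lipschitz with respect to the supremum norm, the composition $F_{\mathcal{B}_{\mathcal{T}}}:\mathcal{C}_0(\mathcal{S})\to\mathcal{C}_0(\mathcal{B}_{\mathcal{T}})$ is again Lipschitz, with Lipschitz constant at most that of $F$. Using the product decomposition $\mathcal{T}=G/H_{\mathcal{T}}\times\mathcal{B}_{\mathcal{T}}$ from (C1), the base space $\mathcal{B}_{\mathcal{T}}$ is identified with a domain in a Euclidean space, so that Theorem \ref{UAT2-modified} applies with source $\mathcal{S}\subset\mathbb{R}^{d}$ and target $\mathcal{B}_{\mathcal{T}}$. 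Given the fixed compact set $E\subset\mathcal{C}_0(\mathcal{S})$, I would invoke Theorem \ref{UAT2-modified} to produce a two-layer FNN $\phi:E\to\mathcal{C}_0(\mathcal{B}_{\mathcal{T}})$ with the activation $\rho$ such that $\|F_{\mathcal{B}_{\mathcal{T}}}|_E-\phi\|_{\infty}<\epsilon/2$ and, crucially, such that its first-layer measure $\mu_{\phi}$ is the Lebesgue measure.

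Next I would verify the hypotheses of the Conversion Theorem for this $\phi$. Condition (C1) holds by assumption, and (C1) requires the specified product decomposition, which is what lets the generator land in $\mathcal{C}_0(\mathcal{B}_{\mathcal{T}})$ in the first place. For (C2), the theorem assumes the existence of a $G$-left-invariant locally finite measure $\nu$ with $\mu_{\phi}\ll\nu$ \emph{for the Lebesgue measure}; since Theorem \ref{UAT2-modified} guarantees that $\mu_{\phi}$ is exactly the Lebesgue measure, (C2) is satisfied on the nose. Applying Theorem \ref{conversion-thm} with tolerance $\epsilon/2$ then yields a CNN $\Phi\in\mathcal{N}_{\sf{CNN}}(\rho,2;\mathcal{S},\mathcal{T})$ whose number of layers equals that of $\phi$ (hence two), and by inequality (\ref{norm2}) it satisfies
\begin{align*}
\|F|_E-\Phi\|_{\infty}\le\|F_{\mathcal{B}_{\mathcal{T}}}|_E-\phi\|_{\infty}+\frac{\epsilon}{2}<\frac{\epsilon}{2}+\frac{\epsilon}{2}=\epsilon.
\end{align*}
Setting $\Phi_E:=\Phi$ completes the argument.

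The routine parts are the Lipschitz bound on the generator and the $\epsilon/2$ error splitting, both of which follow immediately from the cited theorems. The one genuine point to pin down is that the base space $\mathcal{B}_{\mathcal{T}}$ really does sit inside a Euclidean space as a domain, so that Theorem \ref{UAT2-modified} is legitimately applicable with $\mathcal{B}_{\mathcal{T}}$ as its output index set, and that the Lebesgue measure supplied as $\mu_{\phi}$ matches the measure assumed in the version of (C2) carried in the hypothesis. I expect the main (if modest) obstacle to be this bookkeeping about dimensions and measures on $\mathcal{B}_{\mathcal{T}}$ versus $\mathcal{T}$, rather than any substantively new argument beyond the two theorems being glued together.
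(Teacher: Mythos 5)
Your proposal is correct and follows essentially the same route as the paper: the paper obtains Theorem \ref{UAT2-equiv} precisely by applying Theorem \ref{UAT2-modified} to the Lipschitz generator $F_{\mathcal{B}_{\mathcal{T}}}$ (whose first-layer measure is the Lebesgue measure) and then invoking the Conversion Theorem \ref{conversion-thm} through inequality (\ref{norm2}). Your added bookkeeping --- the $1$-Lipschitz restriction map, the identification $\mathcal{B}_{\mathcal{T}}\subset\mathbb{R}^{d'}$, and the $\epsilon/2$ splitting --- simply makes explicit what the paper leaves implicit.
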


Lastly, we mention some universal approximation theorems for some concrete groups.
When a group $G$ is an Euclidean group $\mathrm{E}(d)$ or a special Euclidean group $\mathrm{SE}(d)$,
Theorem \ref{UAT2-equiv} shows that group CNNs are universal approximators of $G$-equivariant maps.
Although Yarotsky (2018) showed that group CNNs can approximate $\mathrm{SE}(2)$-equivariant maps,
our result for $d\ge 3$ was not shown in existing studies. 
Since Euclidean groups can be used to represent 3D motion and point cloud,
Theorem \ref{UAT2-equiv} can provide the theoretical guarantee of 3D data processing with group CNNs.
As another example, 
when a group $G$ is $\mathrm{SO}^{+}(d,1)$,
$G$ acts on the upper half plane $\mathbb{H}^{d+1}$, which is shown to be suitable for word representations in NLP (\citet{nickel2017poincare}).
Since the action of $G$ preserves the distance on $\mathbb{H}^{d+1}$,
group convolution with $\mathrm{SO}^{+}(d,1)$ may be useful for NLP.

\section{Conclusion}\label{sec:conclusion}

We have considered universal approximation theorems for equivariant maps by group CNNs.
To prove the theorems, 
we showed that an equivariant map is uniquely determined by its generator. 
Thus, 
when we can take a fully-connected neural network to approximate the generator, the approximator of the equivariant map can be described as a group CNN from the conversion theorem.
In this way, the universal approximation for equivariant maps by group CNNs can be obtained through the universal approximation for the generator by FNNs. 
We have described FNNs and group CNNs in an abstract way.
In particular, 
we provided a novel universal approximation theorem by FNNs in the infinite dimension, where the support of the input functions is unbounded.
Using this result, we obtained the universal approximation theorem for equivariant maps for non-compact groups. 

We mention future work.
In Theorem \ref{UAT2-equiv},
we assumed sets $\mathcal{S}$ and $\mathcal{T}$ to be subspaces of Euclidean spaces.
However, in the conversion theorem (Theorem \ref{conversion-thm}),
sets $\mathcal{S}$ and $\mathcal{T}$ do not need to be subspaces of Euclidean spaces and may have a more general topological structure.
Thus, if there is a universal approximation theorem in non-Euclidean spaces (\citet{courrieu2005function,kratsios2019universal}), 
we may be able to combine it with the conversion theorem and derive its equivariant version. 
%
Next, we note the problem of computational complexity.
Although group convolution can be implemented by, e.g., discretization and localization as in \citet{finzi2020generalizing}, such implementation cannot be applied to high-dimensional groups due to high computational cost.
To use group CNNs for actual machine-learning problems, it is required to construct effective architecture for practical implementation.

%
%
%
%




\newpage
\bibliographystyle{econ}
\bibliography{MLpapers.bib}

@inproceedings{cohen2019general,
  title={A general theory of equivariant cnns on homogeneous spaces},
  author={Cohen, Taco S and Geiger, Mario and Weiler, Maurice},
  booktitle={Advances in Neural Information Processing Systems},
  pages={9142--9153},
  year={2019}
}

@article{courrieu2005function,
  title={Function approximation on non-Euclidean spaces},
  author={Courrieu, Pierre},
  journal={Neural Networks},
  volume={18},
  number={1},
  pages={91--102},
  year={2005},
  publisher={Elsevier}
}

@article{cybenko1989approximation,
  title={Approximation by superpositions of a sigmoidal function},
  author={Cybenko, George},
  journal={Mathematics of control, signals and systems},
  volume={2},
  number={4},
  pages={303--314},
  year={1989},
  publisher={Springer}
}

@article{finzi2020generalizing,
  title={Generalizing Convolutional Neural Networks for Equivariance to Lie Groups on Arbitrary Continuous Data},
  author={Finzi, Marc and Stanton, Samuel and Izmailov, Pavel and Wilson, Andrew Gordon},
  journal={arXiv preprint arXiv:2002.12880},
  year={2020}
}

@article{gordon2019convolutional,
  title={Convolutional conditional neural processes},
  author={Gordon, Jonathan and Bruinsma, Wessel P and Foong, Andrew YK and Requeima, James and Dubois, Yann and Turner, Richard E},
  journal={arXiv preprint arXiv:1910.13556},
  year={2019}
}

@article{kratsios2019universal,
  title={The universal approximation property: Characterizations, existence, and a canonical topology for deep-learning},
  author={Kratsios, Anastasis},
  journal={arXiv preprint arXiv:1910.03344},
  year={2019}
}

@inproceedings{keriven2019universal,
  title={Universal invariant and equivariant graph neural networks},
  author={Keriven, Nicolas and Peyr{\'e}, Gabriel},
  booktitle={Advances in Neural Information Processing Systems},
  pages={7092--7101},
  year={2019}
}

@article{krukowski2018frechet,
  title={Frechet-Kolmogorov-Riesz-Weil's theorem on locally compact groups via Arzela-Ascoli's theorem},
  author={Krukowski, Mateusz},
  journal={arXiv preprint arXiv:1801.01898},
  year={2018}
}

@article{maehara2019simple,
  title={A Simple Proof of the Universality of Invariant/Equivariant Graph Neural Networks},
  author={Maehara, Takanori and NT, Hoang},
  journal={arXiv preprint arXiv:1910.03802},
  year={2019}
}

@article{meinrenken2003group,
  title={Group actions on manifolds},
  author={Meinrenken, Eckhard},
  journal={Lecture Notes, University of Toronto, Spring},
  volume={2003},
  year={2003}
}

@inproceedings{nickel2017poincare,
  title={Poincar{\'e} embeddings for learning hierarchical representations},
  author={Nickel, Maximillian and Kiela, Douwe},
  booktitle={Advances in neural information processing systems},
  pages={6338--6347},
  year={2017}
}

@inproceedings{shawe1989building,
  title={Building symmetries into feedforward networks},
  author={Shawe-Taylor, John},
  booktitle={1989 First IEE International Conference on Artificial Neural Networks,(Conf. Publ. No. 313)},
  pages={158--162},
  year={1989},
  organization={IET}
}

@inproceedings{zaheer2017deep,
  title={Deep sets},
  author={Zaheer, Manzil and Kottur, Satwik and Ravanbakhsh, Siamak and Poczos, Barnabas and Salakhutdinov, Russ R and Smola, Alexander J},
  booktitle={Advances in neural information processing systems},
  pages={3391--3401},
  year={2017}
}

@article{sonoda2017neural,
	title="Neural network with unbounded activation functions is universal approximator",
	author="Sho {Sonoda} and Noboru {Murata}",
	journal="Applied and Computational Harmonic Analysis",
	volume="43",
	number="2",
	pages="233--268",
	year="2017"
}

@article{funahashi1989approximate,
	Author = {Funahashi, Ken-Ichi},
	Date-Added = {2019-02-25 14:05:31 +0000},
	Date-Modified = {2019-02-25 14:05:31 +0000},
	Journal = {Neural networks},
	Number = {3},
	Pages = {183--192},
	Publisher = {Elsevier},
	Title = {On the approximate realization of continuous mappings by neural networks},
	Volume = {2},
	Year = {1989}}

@article{hornik1989multilayer,
	Author = {Hornik, Kurt and Stinchcombe, Maxwell and White, Halbert},
	Date-Added = {2019-02-25 14:04:50 +0000},
	Date-Modified = {2019-02-25 14:04:50 +0000},
	Journal = {Neural networks},
	Number = {5},
	Pages = {359--366},
	Publisher = {Elsevier},
	Title = {Multilayer feedforward networks are universal approximators},
	Volume = {2},
	Year = {1989}}

@article{barron1994approximation,
	Author = {Barron, Andrew R},
	Date-Added = {2019-02-25 14:04:07 +0000},
	Date-Modified = {2019-02-25 14:04:07 +0000},
	Journal = {Machine learning},
	Number = {1},
	Pages = {115--133},
	Publisher = {Springer},
	Title = {Approximation and estimation bounds for artificial neural networks},
	Volume = {14},
	Year = {1994}}

@inproceedings{deepsets,
  title={Deep sets},
  author={Zaheer, Manzil and Kottur, Satwik and Ravanbakhsh, Siamak and Poczos, Barnabas and Salakhutdinov, Ruslan R and Smola, Alexander J},
  booktitle={Advances in neural information processing systems},
  pages={3391--3401},
  year={2017}
}

@article{maron2019universality,
  title={On the Universality of Invariant Networks},
  author={Maron, Haggai and Fetaya, Ethan and Segol, Nimrod and Lipman, Yaron},
  Journal={Proceedings of the 36th International Conference on Machine Learning},
    Place = {Long Beach, California, USA},
    Volume =  {97},
    Year={2019}
}

@article{yarotsky2018universal,
  title={Universal approximations of invariant maps by neural networks},
  author={Yarotsky, Dmitry},
  journal={arXiv preprint arXiv:1804.10306},
  Note={URL: \url{https://arxiv.org/abs/1804.10306}}, 
  year={2018}
}

@article{kurkova1992,
    Author={K\r{u}rkov{\'a}, V\v{e}ra},
    Title = { Kolmogorov’s theorem and multilayer neural networks},  Journal = {Neural networks},
    Volume = {5(3)},
    Pages ={501–506},
    Year ={1992}}

@inproceedings{Gens2014,
  title={Deep symmetry networks},
  author={Gens, Robert and Domingos, Pedro M},
  booktitle={Advances in Neural Information Processing Systems},
  pages={2537--2545},
  year={2014}
}

@inproceedings{maron2018invariant,
title={Invariant and Equivariant Graph Networks},
author={Haggai Maron and Heli Ben-Hamu and Nadav Shamir and Yaron Lipman},
booktitle={International Conference on Learning Representations},
year={2019},
url={https://openreview.net/forum?id=Syx72jC9tm},
}

@article{guss2019universal,
  title={On Universal Approximation by Neural Networks with Uniform Guarantees on Approximation of Infinite Dimensional Maps},
  author={Guss, William H and Salakhutdinov, Ruslan},
  journal={arXiv preprint arXiv:1910.01545},
  year={2019}
}

@article{kondor2018generalization,
  title={On the generalization of equivariance and convolution in neural networks to the action of compact groups},
  author={Kondor, Risi and Trivedi, Shubhendu},
  journal={arXiv preprint arXiv:1802.03690},
  year={2018}
}

@article{petersen2020equivalence,
  title={Equivalence of approximation by convolutional neural networks and fully-connected networks},
  author={Petersen, Philipp and Voigtlaender, Felix},
  journal={Proceedings of the American Mathematical Society},
  volume={148},
  number={4},
  pages={1567--1581},
  year={2020}
}

@article{sannai2019universal,
  title={Universal approximations of permutation invariant/equivariant functions by deep neural networks},
  author={Sannai, Akiyoshi and Takai, Yuuki and Cordonnier, Matthieu},
  journal={arXiv preprint arXiv:1903.01939},
  year={2019}
}

@article{maron2020learning,
  title={On Learning Sets of Symmetric Elements},
  author={Maron, Haggai and Litany, Or and Chechik, Gal and Fetaya, Ethan},
  journal={arXiv preprint arXiv:2002.08599},
  year={2020}
}

@article{ravanbakhsh2020universal,
  title={Universal Equivariant Multilayer Perceptrons},
  author={Ravanbakhsh, Siamak},
  journal={arXiv preprint arXiv:2002.02912},
  year={2020}
}

\newpage
\appendix
\section{Appendix}

\subsection{Properties of Equivariant Maps}\label{sec:property-of-equiv}

\begin{thm}[Degree of Freedom of Equivariant Maps]\label{degree}
Let a group $G$ act on sets $\mathcal{S}$ and $\mathcal{T}$, and $\mathcal{B}\subset\mathcal{T}$ a base space.
Then, a $G$-equivariant map $F:\mathbb{R}^{\mathcal{S}}\to \mathbb{R}^{\mathcal{T}}$ can be represented using its generator as
\begin{align}
F[x](t) 
= F_{\mathcal{B}}[g_t^{-1}\cdot x](P_{\mathcal{B}}(t)), \label{generator}
\end{align}
where 
$g_t\in G$ is an arbitrary element which satisfies $g_t\cdot P_{\mathcal{B}}(t) = t$.
Conversely, for an arbitrary map $F_{\mathcal{B}}:\mathcal{C}(\mathcal{S})\to\mathcal{C}(\mathcal{B})$, 
a map $F$ defined by (\ref{generator}) is an equivariant map whose generator equals $F_{\mathcal{B}}$.
\end{thm}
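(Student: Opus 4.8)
The plan is to prove the two assertions separately by direct computation, using only the definition of the translation operators $T_g[x](s)=x(g^{-1}\cdot s)$, the equivariance relation $F[g\cdot x]=g\cdot F[x]$, the identity $g_t\cdot P_{\mathcal{B}}(t)=t$, and the fact that the generator is the restriction $F_{\mathcal{B}}=R_{\mathcal{B}}\circ F$. No integral-form machinery is needed; everything is purely set-theoretic/orbit-theoretic.

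For the representation formula, fix $t\in\mathcal{T}$, put $b:=P_{\mathcal{B}}(t)\in\mathcal{B}$, and choose any $g_t$ with $g_t\cdot b=t$. Applying equivariance with the element $g_t^{-1}$ gives $F[g_t^{-1}\cdot x]=g_t^{-1}\cdot F[x]$; evaluating both sides at $b$ and unwinding the definition of $T_{g_t^{-1}}$ yields $F[g_t^{-1}\cdot x](b)=(g_t^{-1}\cdot F[x])(b)=F[x](g_t\cdot b)=F[x](t)$. Since $b\in\mathcal{B}$, the left-hand side equals $F_{\mathcal{B}}[g_t^{-1}\cdot x](P_{\mathcal{B}}(t))$, which is exactly (\ref{generator}). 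Because the right-hand side is the choice-free quantity $F[x](t)$, this argument simultaneously shows that the expression in (\ref{generator}) is independent of which $g_t$ is selected.

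For the converse I would take (\ref{generator}) as the definition of $F$ built from a given $F_{\mathcal{B}}$, after fixing a section $t\mapsto g_t$ with $g_b=1$ for every representative $b\in\mathcal{B}$, and then verify three things: (i) well-definedness, i.e. independence of the choice of $g_t$; (ii) $G$-equivariance; and (iii) $R_{\mathcal{B}}\circ F=F_{\mathcal{B}}$. Point (iii) is immediate from $g_b=1$ and $P_{\mathcal{B}}(b)=b$, which give $F[x](b)=F_{\mathcal{B}}[x](b)$. For (ii), writing $t'=g^{-1}\cdot t$ and computing $g_t^{-1}\cdot(g\cdot x)=(g_t^{-1}g)\cdot x$ directly from the definition of $T$, one obtains $F[g\cdot x](t)=F_{\mathcal{B}}[(g_t^{-1}g)\cdot x](b)$ and $(g\cdot F[x])(t)=F[x](t')=F_{\mathcal{B}}[g_{t'}^{-1}\cdot x](b)$, with the same base point $b=P_{\mathcal{B}}(t)=P_{\mathcal{B}}(t')$.

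The crux, which is also the computation behind (i), is to match these two expressions. Here $g_{t'}$ and $g^{-1}g_t$ both send $b$ to $t'$, so they differ by a factor $h\in\mathrm{Stab}(b):=\{h\in G: h\cdot b=b\}$; consequently (ii) reduces to the invariance $F_{\mathcal{B}}[h^{-1}\cdot z](b)=F_{\mathcal{B}}[z](b)$ for all $z$ and all $h\in\mathrm{Stab}(b)$. I expect this stabilizer invariance to be the main obstacle. For a genuine generator it holds automatically (apply the forward direction together with $h^{-1}\cdot b=b$), which is precisely what makes the correspondence well posed; for the converse it is the property one must secure, either by restricting to base points with trivial stabilizer or, as in the coset/direct-product setting of condition (C1), by choosing the orbit representatives and the section $g_t$ coherently so that the stabilizer factor drops out. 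The remaining routine step is to confirm that the $F$ so defined lands in $\mathcal{C}(\mathcal{T})$, which follows from continuity of the action and of the chosen section.
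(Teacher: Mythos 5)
Your forward-direction argument is the paper's own proof, step for step: evaluate at $P_{\mathcal{B}}(t)=g_t^{-1}\cdot t$, use $F_{\mathcal{B}}=R_{\mathcal{B}}\circ F$, apply equivariance to $g_t^{-1}$, and unwind the translation operator. Nothing needs to change there, and your remark that this computation also proves choice-independence of (\ref{generator}) for genuine generators is correct.

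For the converse, you have put your finger on a step that the paper's own proof does not justify. The paper passes from $F_{\mathcal{B}}[g_{g^{-1}\cdot t}^{-1}\cdot x](P_{\mathcal{B}}(t))$ to $F_{\mathcal{B}}[g_t^{-1}\cdot(g\cdot x)](P_{\mathcal{B}}(t))$ on the sole grounds that $g^{-1}g_t$ is \emph{one} admissible choice of $g_{g^{-1}\cdot t}$; this is legitimate only if the right-hand side of (\ref{generator}) is independent of the choice of representative, which is exactly the stabilizer invariance $F_{\mathcal{B}}[h^{-1}\cdot z](b)=F_{\mathcal{B}}[z](b)$ for $h\in\mathrm{Stab}(b)$ that you isolate as the crux. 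For arbitrary $F_{\mathcal{B}}$ this fails, and then the converse is not merely unproven but false. Concretely, let $G=S_3$ act on $\mathcal{S}=\mathcal{T}=\{1,2,3\}$ by permutation, $\mathcal{B}=\{1\}$, and $F_{\mathcal{B}}[x]:=x(2)$. If some equivariant $F$ had $R_{\mathcal{B}}\circ F=F_{\mathcal{B}}$, then for $h=(2\,3)\in\mathrm{Stab}(1)$ we would get $F_{\mathcal{B}}[h\cdot x]=F[h\cdot x](h\cdot 1)=(h\cdot F[x])(h\cdot 1)=F[x](1)=F_{\mathcal{B}}[x]$, i.e.\ $(h\cdot x)(2)=x(2)$, that is $x(3)=x(2)$ for all $x$ --- a contradiction. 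Note that this is precisely the paper's DeepSets setting, where the generator of a permutation-equivariant map is classically known to be $\mathrm{Stab}(1)$-invariant rather than arbitrary; so the restriction you identified is not an artifact of your method.

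The one element of your plan that should be discarded is the hoped-for repair by ``choosing the orbit representatives and the section $g_t$ coherently'': no choice of section can make the stabilizer factor drop out, because your own forward argument shows that the generator of \emph{any} equivariant map is automatically stabilizer-invariant, so a non-invariant $F_{\mathcal{B}}$ can never arise as a generator however (\ref{generator}) is instantiated. The correct repair is to add stabilizer invariance of $F_{\mathcal{B}}$ as a hypothesis in the converse (it is vacuous exactly when the stabilizers of base points act trivially on $\mathcal{S}$, e.g.\ for free actions). Under that hypothesis, (\ref{generator}) is well defined independently of $g_t$, your step (ii) closes because $g_{t'}$ and $g^{-1}g_t$ differ by an element of $\mathrm{Stab}(b)$, and (iii) holds for any admissible $g_b\in\mathrm{Stab}(b)$, not only $g_b=1$. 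With that amendment your proof is complete, and it is in substance the paper's proof carried out carefully; as written, both the paper's statement and its proof of the converse contain the gap you found.
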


\begin{proof}
For any $x\in \mathbb{R}^{\mathcal{S}}$ and $t\in \mathcal{T}$, the following holds:  
\begin{align}
F_{\mathcal{B}}[g_t^{-1}\cdot x](P_{\mathcal{B}}(t))
&= F_{\mathcal{B}}[g_t^{-1}\cdot x](g_t^{-1}\cdot t)\nonumber\\
&= F[g_t^{-1}\cdot x](g_t^{-1}\cdot t) \nonumber\\
&= (g_t^{-1}\cdot F[x])(g_t^{-1}\cdot t)\nonumber\\
&= F[x](t),
\end{align}
where $g_t\in G$ is an arbitrary element which satisfies $g_t\cdot P_{\mathcal{B}}(t) = t$ and the third equality follows from the equivariance of $F$.

Conversely, for an arbitrary map $F_{\mathcal{B}}:\mathcal{C}(\mathcal{S})\to\mathcal{C}(\mathcal{B})$, 
a map $F$ defined by (\ref{generator}) is an equivariant map whose generator equals $F_{\mathcal{B}}$ as follows:
\begin{align*}
    (g\cdot F[x])(t)
    &= F[x](g^{-1}\cdot t)\\
    &= F_{\mathcal{B}}[g_{g^{-1}\cdot t}^{-1}\cdot x](P_{\mathcal{B}}(g^{-1}\cdot t))\\
    &= F_{\mathcal{B}}[g_{g^{-1}\cdot t}^{-1}\cdot x](P_{\mathcal{B}}(t))\\
    &= F_{\mathcal{B}}[(gg_{t}^{-1})\cdot x](P_{\mathcal{B}}(t))\\
    &= F_{\mathcal{B}}[g_{t}^{-1}\cdot (g\cdot x)](P_{\mathcal{B}}(t))\\
    &= F[g\cdot x](t),
\end{align*}
where we used $g_{g^{-1}\cdot t} = gg_{t}^{-1}$ in the forth equality because $(g^{-1}g_t)\cdot P_{\mathcal{B}}(g^{-1}\cdot t) = g^{-1}\cdot t$.
\hspace{\fill}
\end{proof}

Theorem \ref{degree} clarifies the rigidity and flexibility of the class of equivariant maps.
That is, equivariant maps are completely rigid given generators in the sense that the generator determines those.
On the other hand, the generators of equivariant maps are entirely flexible because they have no restrictions on constructing equivariant maps.

From the following proposition,
the distance between equivariant maps is calculated from their generators.

\begin{prp}[Isometric Restriction] \label{restriction-norm}
    Let a group $G$ act on sets $\mathcal{S}$ and $\mathcal{T}$, and $\mathcal{B}\subset\mathcal{T}$ an arbitrary base space.
    The restriction $R_{\mathcal{B}}$ onto $\mathcal{B}$ is isometry from equivariant maps. 
    That is, for two $G$-equivariant maps $F$ and $\tilde{F}:\mathbb{R}^{\mathcal{S}}\to\mathbb{R}^{\mathcal{T}}$, 
    \begin{align}
        \|F-\tilde{F}\|_{\infty} = \|F_{\mathcal{B}} - \tilde{F}_{\mathcal{B}}\|_{\infty}.
    \end{align}
\end{prp}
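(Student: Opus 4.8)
The plan is to prove the two inequalities $\|F_{\mathcal{B}} - \tilde{F}_{\mathcal{B}}\|_{\infty} \le \|F - \tilde{F}\|_{\infty}$ and $\|F - \tilde{F}\|_{\infty} \le \|F_{\mathcal{B}} - \tilde{F}_{\mathcal{B}}\|_{\infty}$ separately. The first is immediate: since $\mathcal{B} \subset \mathcal{T}$, the supremum defining $\|F_{\mathcal{B}} - \tilde{F}_{\mathcal{B}}\|_{\infty}$ runs over a subset of the output indices $t$ appearing in $\|F - \tilde{F}\|_{\infty}$, so restricting the output to $\mathcal{B}$ can only shrink the supremum. Concretely, $\|F_{\mathcal{B}} - \tilde{F}_{\mathcal{B}}\|_{\infty} = \sup_{x} \sup_{t \in \mathcal{B}} |F[x](t) - \tilde{F}[x](t)| \le \sup_{x} \sup_{t \in \mathcal{T}} |F[x](t) - \tilde{F}[x](t)| = \|F - \tilde{F}\|_{\infty}$.

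For the reverse inequality I would invoke the explicit representation of an equivariant map by its generator from Theorem \ref{degree}. Applying the identity (\ref{generator}) to both $F$ and $\tilde{F}$ with the same choice of $g_t$ for each $t$, the pointwise difference becomes
\[
F[x](t) - \tilde{F}[x](t) = F_{\mathcal{B}}[g_t^{-1}\cdot x](P_{\mathcal{B}}(t)) - \tilde{F}_{\mathcal{B}}[g_t^{-1}\cdot x](P_{\mathcal{B}}(t)).
\]
The crucial observation is that $P_{\mathcal{B}}(t) \in \mathcal{B}$, so the right-hand side is a generator difference evaluated at the input $g_t^{-1}\cdot x$ and at a point of the base space. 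Hence each term $|F[x](t) - \tilde{F}[x](t)|$ is bounded above by $\|F_{\mathcal{B}} - \tilde{F}_{\mathcal{B}}\|_{\infty}$.

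Taking the supremum over $t \in \mathcal{T}$ and over $x$ then yields $\|F - \tilde{F}\|_{\infty} \le \|F_{\mathcal{B}} - \tilde{F}_{\mathcal{B}}\|_{\infty}$, which together with the first inequality gives the claimed isometry. The one point that needs care — and which I regard as the main, though mild, obstacle — is to justify that passing from the argument $x$ to the translated argument $g_t^{-1}\cdot x$ does not enlarge the relevant supremum. This is where I would use that the translation operator $T_{g_t^{-1}}$ is a bijection of the input space onto itself, so that $g_t^{-1}\cdot x$ already ranges over the same domain on which the generator supremum is computed. Equivalently, one may first note that $F - \tilde{F}$ is itself $G$-equivariant, since each $T_g$ is linear, with generator $F_{\mathcal{B}} - \tilde{F}_{\mathcal{B}}$, and thereby reduce the whole statement to the single-map identity $\|H\|_{\infty} = \|H_{\mathcal{B}}\|_{\infty}$ for an equivariant $H$.
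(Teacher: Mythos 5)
Your proof is correct and takes essentially the same route as the paper's: both hinge on the identity $F[x](g\cdot \tau)=F_{\mathcal{B}}[g^{-1}\cdot x](\tau)$, which is exactly your invocation of Theorem \ref{degree} with $t=g_t\cdot P_{\mathcal{B}}(t)$, and your closing observation that translation by $g_t^{-1}$ keeps the argument inside $\mathbb{R}^{\mathcal{S}}$ is the same point the paper uses when it replaces $\sup_x$ over $g^{-1}\cdot x$ by $\sup_x$ over $x$. The only cosmetic difference is that you split the equality into two one-sided inequalities, whereas the paper writes a single chain of equalities.
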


\begin{proof}
We note that, for any base space $\mathcal{B}\subset \mathcal{T}$, $g\in G$ and $\tau\in\mathcal{B}$, 
\begin{align*}
    F[x](g\cdot \tau) 
    = F_{\mathcal{B}}[g^{-1} \cdot x]\circ P_{\mathcal{B}}(\tau) 
    = F_{\mathcal{B}}[g^{-1} \cdot x](\tau).
\end{align*}
Thus, 
\begin{align*}
    \left\|F-\tilde{F}\right\|_{\infty}
    :=& \sup_{x\in \mathbb{R}^{\mathcal{S}}} \sup_{t\in \mathcal{T}} \left|F[x](t)-\tilde{F}[x](t)\right|\\
    =& \sup_{x\in \mathbb{R}^{\mathcal{S}}} \sup_{\tau\in \mathcal{B}, g\in G} \left|F[x](g\cdot \tau)-\tilde{F}[x](g\cdot \tau)\right|\\
    =& \sup_{x\in \mathbb{R}^{\mathcal{S}}} \sup_{\tau\in \mathcal{B}, g\in G} \left|F_{\mathcal{B}}[g^{-1} \cdot x](\tau) - \tilde{F}_{\mathcal{B}}[g^{-1} \cdot x](\tau)\right|\\
    =& \sup_{x\in \mathbb{R}^{\mathcal{S}}} \sup_{\tau\in \mathcal{B}} \left|F_{\mathcal{B}}[x](\tau) - \tilde{F}_{\mathcal{B}}[x](\tau)\right|\\
    =& \left\|F_{\mathcal{B}} - \tilde{F}_{\mathcal{B}}\right\|_{\infty}.
\end{align*}
This completes the proof of Proposition \ref{restriction-norm}.
\hspace{\fill}
\end{proof}

We immediately obtain the following corollary from Proposition \ref{restriction-norm}. 
\begin{cor}[Identity Condition]\label{identity-thm}
Let a group $G$ act on sets $\mathcal{S}$ and $\mathcal{T}$, and $\mathcal{B}\subset\mathcal{T}$ an arbitrary base space.
Let $F$ and $\tilde{F}:\mathbb{R}^{\mathcal{S}}\to \mathbb{R}^{\mathcal{T}}$ be $G$-equivariant maps. Then, $F=\tilde{F}$ if and only if $F_{\mathcal{B}} = \tilde{F}_{\mathcal{B}}$.
\end{cor}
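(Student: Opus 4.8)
The plan is to read this off directly from Proposition \ref{restriction-norm}. The one fact I need beyond that proposition is the positive definiteness of the supremum norm: for any map $H:\mathbb{R}^{\mathcal{S}}\to\mathbb{R}^{\mathcal{T}}$, the quantity $\|H\|_{\infty}=\sup_{x}\sup_{t}|H[x](t)|$ vanishes precisely when $H[x](t)=0$ for every $x\in\mathbb{R}^{\mathcal{S}}$ and $t\in\mathcal{T}$, i.e.\ precisely when $H$ is the zero map. Applying this to $F$ and $\tilde{F}$, equality $F=\tilde{F}$ is equivalent to $\|F-\tilde{F}\|_{\infty}=0$, and likewise $F_{\mathcal{B}}=\tilde{F}_{\mathcal{B}}$ is equivalent to $\|F_{\mathcal{B}}-\tilde{F}_{\mathcal{B}}\|_{\infty}=0$.

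First I would invoke Proposition \ref{restriction-norm}, whose hypotheses are exactly those of the corollary (both $F$ and $\tilde{F}$ are $G$-equivariant and $\mathcal{B}$ is a base space), to obtain the isometry $\|F-\tilde{F}\|_{\infty}=\|F_{\mathcal{B}}-\tilde{F}_{\mathcal{B}}\|_{\infty}$. Chaining this with the two equivalences above yields
\[
F=\tilde{F}\iff \|F-\tilde{F}\|_{\infty}=0\iff \|F_{\mathcal{B}}-\tilde{F}_{\mathcal{B}}\|_{\infty}=0\iff F_{\mathcal{B}}=\tilde{F}_{\mathcal{B}},
\]
which is the claim. There is no genuine obstacle here: the entire content is carried by the isometry of Proposition \ref{restriction-norm}, and the only point to state carefully is that a supremum norm is zero exactly when its argument is identically zero, so that the ``iff'' survives passage through the norm in both directions.

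As a cross-check, the nontrivial implication can also be obtained without the norm, straight from the representation formula (\ref{generator}) of Theorem \ref{degree}: if $F_{\mathcal{B}}=\tilde{F}_{\mathcal{B}}$, then for every $x$ and $t$ we have $F[x](t)=F_{\mathcal{B}}[g_t^{-1}\cdot x](P_{\mathcal{B}}(t))=\tilde{F}_{\mathcal{B}}[g_t^{-1}\cdot x](P_{\mathcal{B}}(t))=\tilde{F}[x](t)$, so $F=\tilde{F}$; the reverse implication is immediate since $F_{\mathcal{B}}=R_{\mathcal{B}}\circ F$ depends only on $F$. This confirms that the corollary merely restates the rigidity already encoded in Theorem \ref{degree}.
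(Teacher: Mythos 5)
Your proof is correct and follows exactly the paper's route: the paper derives this corollary ``immediately'' from Proposition \ref{restriction-norm}, which is precisely your chain $F=\tilde{F}\iff \|F-\tilde{F}\|_{\infty}=0\iff \|F_{\mathcal{B}}-\tilde{F}_{\mathcal{B}}\|_{\infty}=0\iff F_{\mathcal{B}}=\tilde{F}_{\mathcal{B}}$, with the positive definiteness of the supremum norm spelled out. Your cross-check via the representation formula (\ref{generator}) is a valid alternative but not needed; the main argument matches the paper.
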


\subsection{Proof of Theorem \ref{UAT2-modified}}

\citet{guss2019universal} derived the following theorem in infinite-dimensional settings.
\begin{thm}[Universal Approximation for Continuous Maps by FNNs, \citet{guss2019universal}]\label{UAT1-modified}
Let an activation function $\rho: \mathbb{R} \rightarrow \mathbb{R}$ be continuous and non-polynomial.
Let $\mathcal{S} \subset \mathbb{R}^{d}$ and $\mathcal{T} \subset \mathbb{R}^{d^{\prime}}$ be compact domains.
Let $F:\mathcal{C}(\mathcal{S})\to \mathcal{C}(\mathcal{T})$ be a continuous map.
Then, for any compact $E\subset \mathcal{C}(\mathcal{S})$ and $\epsilon>0$, 
there exist $N\in\mathbb{N}$ and a two-layer fully connected neural network $\phi_E=A_2 \circ \rho \circ A_1\in\mathcal{N}_{\sf{FNN}}(\rho, 2; \mathcal{S}, \mathcal{T})$ such that 
$A_1[\cdot]=W^{(1)}[\cdot] +b^{(1)}:E\to \mathcal{C}([N])=\mathbb{R}^N$,
$A_2[\cdot]=W^{(2)}[\cdot] +b^{(2)}:\mathbb{R}^N\to \mathcal{C}(\mathcal{T})$,
$\mu_{\phi_E}$ is the Lebesgue measure,
and 
$\| F|_E - \phi_E\|_{\mathrm{\infty}} < \epsilon$.
\end{thm}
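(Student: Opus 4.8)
The plan is to reconstruct the operator-approximation argument by factoring the continuous map $F$ through a finite-dimensional Euclidean bottleneck, so that the infinite-dimensional parts collapse into the two affine layers $A_1$ and $A_2$ while a classical finite-dimensional universal approximation theorem handles the nonlinearity in between. The whole scheme rests on the compactness of $\mathcal{S}$, $\mathcal{T}$, and $E$: since $E\subset\mathcal{C}(\mathcal{S})$ is compact and $\mathcal{S}$ is a compact domain, Arzel\`a--Ascoli makes $E$ uniformly bounded and equicontinuous, and the same holds for the image $F(E)\subset\mathcal{C}(\mathcal{T})$ because $F$ is continuous.

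First I would discretize the output. Using equicontinuity of $F(E)$, I would fix a fine finite mesh $t_1,\dots,t_M\in\mathcal{T}$ together with a continuous partition of unity $\{\psi_j\}_{j=1}^{M}$ subordinate to balls of small radius, so that the reconstruction $y\mapsto\sum_j y(t_j)\psi_j$ approximates the identity on $F(E)$ within $\epsilon/4$ in $\|\cdot\|_\infty$. This reduces the target to the finitely many scalar functionals $x\mapsto F[x](t_j)$, which are continuous on the compact metric space $E$.

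Next I would discretize the input through a smoothed sampling/decoding pair. Fixing a fine mesh $s_1,\dots,s_K\in\mathcal{S}$ with an associated partition of unity $\{\varphi_k\}$, the decoding $v\mapsto\sum_k v_k\varphi_k$ from $\mathbb{R}^K$ into $\mathcal{C}(\mathcal{S})$ satisfies $\|x-\sum_k x(s_k)\varphi_k\|_\infty\le\epsilon'$ for all $x\in E$ by equicontinuity. Crucially, to meet the Lebesgue-measure requirement on $\mu_{\phi_E}$ I would replace the point samples $x(s_k)$ by mollified samples $\int_{\mathcal{S}}x(s)\eta_k(s)\,ds$ against normalized bump densities $\eta_k$ concentrated near $s_k$; equicontinuity again gives $\int x\,\eta_k\,ds\approx x(s_k)$ uniformly on $E$, so this linear map $\mathcal{C}(\mathcal{S})\to\mathbb{R}^K$ is an integral against Lebesgue measure. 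Composing decode-then-evaluate, I obtain continuous surrogates $G_j:\mathbb{R}^K\to\mathbb{R}$, $G_j(v):=F[\sum_k v_k\varphi_k](t_j)$, which agree with $F[\cdot](t_j)$ on the sample image of $E$ up to $\epsilon/4$ via uniform continuity of $F$.

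Finally I would invoke the classical finite-dimensional universal approximation theorem for continuous non-polynomial activations (Leshno et al.) to approximate the continuous map $G=(G_1,\dots,G_M)$ uniformly on the compact set of sample vectors by a shallow network $B_2\circ\rho\circ B_1$ with $B_1:\mathbb{R}^K\to\mathbb{R}^N$ and $B_2:\mathbb{R}^N\to\mathbb{R}^M$, within $\epsilon/4$. Assembling the pieces, I set $A_1$ to be $B_1$ composed with the mollified-sampling map --- a single affine map from $\mathcal{C}(\mathcal{S})$ to $\mathbb{R}^N$ whose kernel is $\sum_k (B_1)_{ik}\eta_k(s)$ integrated against Lebesgue measure --- and $A_2$ to be $B_2$ followed by the output reconstruction $z\mapsto\sum_j z_j\psi_j$, an affine map $\mathbb{R}^N\to\mathcal{C}(\mathcal{T})$. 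A triangle inequality over these approximation errors, with each contribution controlled below $\epsilon/4$ (absorbing the decoding and mollification errors into the surrogate step), yields $\|F|_E-\phi_E\|_\infty<\epsilon$. The main obstacle is the input-discretization step: one must build globally continuous finite-dimensional surrogates $G_j$ that faithfully represent the functionals $F[\cdot](t_j)$ on the generally non-injective image of the sampling map, and simultaneously arrange that sampling is realized as integration against Lebesgue measure rather than Dirac masses --- the mollification trick, justified by equicontinuity of $E$, is exactly what reconciles these two demands.
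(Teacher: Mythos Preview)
The paper does not supply its own proof of this statement: Theorem~\ref{UAT1-modified} is quoted from \citet{guss2019universal} and used as a black box inside the proof of Theorem~\ref{UAT2-modified}. There is therefore no in-paper argument to compare against.

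Your reconstruction is sound and is essentially the standard route for operator universal approximation in this setting. The three ingredients --- output discretization via a partition of unity on $\mathcal{T}$ (licensed by equicontinuity of $F(E)$), input discretization via mollified sampling against Lebesgue measure (licensed by equicontinuity of $E$), and a finite-dimensional universal approximation result for non-polynomial activations on the compact sample image --- are exactly what one needs, and your care to realize the first affine layer as an honest Lebesgue integral (rather than Dirac evaluation) correctly addresses the $\mu_{\phi_E}$ requirement in the statement. One small point worth tightening: when you define $G_j(v)=F[\sum_k v_k\varphi_k](t_j)$ on all of $\mathbb{R}^K$, you need $F$ to be defined (and continuous) on a neighborhood of the decoded image of the sample set, not just on $E$; since $F$ is assumed continuous on all of $\mathcal{C}(\mathcal{S})$ this is fine, but it is worth saying explicitly that the compact set on which you apply the finite-dimensional theorem is the (compact) image of $E$ under the mollified-sampling map.
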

Thus, any continuous function can be approximated by an FNN whose neurons in the hidden layer is finite. 
%
\citet{krukowski2018frechet} derived the following theorem\footnote{Although Arzelà-Ascoli Theorem for functions on \textit{compact} Hausdorff spaces is well-known,
here we require its non-compact version.}.
\begin{thm}[Arzelà-Ascoli Theorem for $C_{0}(\mathcal{S})$, \citet{krukowski2018frechet}] \label{AA-theorem}
Let $\mathcal{S}$ be a locally compact Hausdorff space. 
A subset $E\subset C_{0}(\mathcal{S})$ is relatively compact if and only if the following three conditions hold:
\begin{itemize}
    \item[(A1)] $E$ is point-wise bounded, i.e. for any  $s \in \mathcal{S}$, the inequality $\displaystyle\sup _{x \in E}|x(s)|<\infty$ holds,
    \item[(A2)] $E$ is equicontinuous, i.e., for any $\varepsilon>0$ and $ s \in \mathcal{S}$, there exists a neighborhood $U$ around $s$, for any $\tilde{s}\in U$, the inequality $\displaystyle\sup _{x \in E}|x(\tilde{s})-x(s)|<\varepsilon$ holds,
    \item[(A3)] $E$ is equivanishing, i.e., for any  ${\varepsilon>0}$, there exists a compact set ${K \subset \mathcal{S}}$, for any $s\notin K$, the inequality $\displaystyle\sup _{x \in E}|x(s)|<\varepsilon$ holds.
\end{itemize}
\end{thm}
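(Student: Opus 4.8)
The plan is to reduce the statement to the classical Arzel\`a--Ascoli theorem on a \emph{compact} Hausdorff space by passing to the one-point (Alexandrov) compactification $\mathcal{S}^{+}=\mathcal{S}\cup\{\infty\}$. Because $\mathcal{S}$ is locally compact Hausdorff, $\mathcal{S}^{+}$ is compact Hausdorff, and each $x\in C_0(\mathcal{S})$ extends to a continuous function $\hat{x}$ on $\mathcal{S}^{+}$ by setting $\hat{x}(\infty)=0$; continuity of $\hat{x}$ at $\infty$ is exactly the statement that $x$ vanishes at infinity. I would first verify that $x\mapsto\hat{x}$ is an isometric isomorphism of $C_0(\mathcal{S})$ onto the closed hyperplane
\[
H:=\{f\in C(\mathcal{S}^{+}) : f(\infty)=0\},
\]
which is closed because it is the kernel of the bounded evaluation functional $f\mapsto f(\infty)$. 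Writing $\hat{E}:=\{\hat{x}:x\in E\}\subset H$, relative compactness of $E$ in $C_0(\mathcal{S})$ is then equivalent to relative compactness of $\hat{E}$ in $C(\mathcal{S}^{+})$, since the closure of $\hat{E}$ in $C(\mathcal{S}^{+})$ already lies in the closed set $H$.

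The heart of the argument is to translate the three conditions on $E$ over $\mathcal{S}$ into the two hypotheses of the classical theorem for $\hat{E}$ over $\mathcal{S}^{+}$, namely pointwise boundedness and equicontinuity at every point of $\mathcal{S}^{+}$. For the finite points $s\in\mathcal{S}$, pointwise boundedness of $\hat{E}$ is precisely (A1); and since $\mathcal{S}$ is open in $\mathcal{S}^{+}$ the neighborhoods of $s$ in the two spaces coincide, so equicontinuity of $\hat{E}$ at $s$ is precisely (A2). Pointwise boundedness at $\infty$ is automatic because $\hat{x}(\infty)=0$. The only nontrivial correspondence is at the added point $\infty$: a basic neighborhood of $\infty$ has the form $(\mathcal{S}\setminus K)\cup\{\infty\}$ with $K\subset\mathcal{S}$ compact, and because $\hat{x}(\infty)=0$, equicontinuity of $\hat{E}$ at $\infty$ reads ``for every $\varepsilon>0$ there is a compact $K$ with $\sup_{x\in E}\sup_{s\notin K}|x(s)|<\varepsilon$,'' which is exactly the equivanishing condition (A3). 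Thus (A1)--(A3) hold if and only if $\hat{E}$ is pointwise bounded and equicontinuous on $\mathcal{S}^{+}$.

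With this dictionary in hand I would invoke the classical Arzel\`a--Ascoli theorem on the compact Hausdorff space $\mathcal{S}^{+}$: a subset of $C(\mathcal{S}^{+})$ is relatively compact if and only if it is pointwise bounded and equicontinuous, using that on a compact domain pointwise boundedness together with equicontinuity upgrades to uniform boundedness, so that total boundedness in the uniform norm follows. Chaining the equivalences shows that $E$ is relatively compact in $C_0(\mathcal{S})$ iff $\hat{E}$ is relatively compact in $C(\mathcal{S}^{+})$ iff $\hat{E}$ is pointwise bounded and equicontinuous on $\mathcal{S}^{+}$ iff (A1), (A2) and (A3) hold, which proves both directions of the biconditional simultaneously.

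The main obstacle is getting the correspondence at $\infty$ exactly right: one must check that the Alexandrov neighborhoods of $\infty$ are the complements of compact sets, that the normalization $\hat{x}(\infty)=0$ makes equicontinuity at $\infty$ collapse to uniform smallness of $|x|$ outside a compact set, and that the whole scheme genuinely requires $\mathcal{S}$ to be locally compact Hausdorff so that $\mathcal{S}^{+}$ is compact Hausdorff and the classical theorem applies. The remaining verifications---that the extension map is a well-defined isometry onto a closed subspace and that pointwise boundedness plus equicontinuity yields uniform boundedness on the compact domain---are routine.
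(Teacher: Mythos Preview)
Your proposal is correct and follows the standard one-point compactification route to reduce to the classical Arzel\`a--Ascoli theorem on a compact Hausdorff space; the dictionary you set up between (A1)--(A3) and pointwise boundedness plus equicontinuity on $\mathcal{S}^{+}$ is exactly right, with (A3) corresponding to equicontinuity at $\infty$.

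The paper itself does not prove this statement at all: it is quoted as a known result from \citet{krukowski2018frechet} and invoked as a black box in the proof of Theorem~\ref{UAT2-modified}. So there is no ``paper's own proof'' to compare against. Your argument is a complete and self-contained proof of the cited theorem, and indeed the compactification approach you outline is one of the standard derivations of this non-compact variant of Arzel\`a--Ascoli.
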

Note that $E$ in Theorem \ref{UAT2-modified} is relatively compact since any compact set is relatively compact in an arbitrary metric space, and thus, we can use Theorem \ref{AA-theorem}.

Let $L>0$ be the Lipschitz constant of $F$.
By (A3) of Theorem \ref{AA-theorem}, for any $\epsilon>0$, 
there exist a large $r>0$ and a small $\delta>0$,
for any $\bar{r}\ge r-\delta$,
a compact ball $B_{\bar{r}}:=\{x\in \mathcal{S}| \|x\|_{\mathbb{R}^d}\le \bar{r} \}\subset \mathcal{S}$ centered at $0\in\mathbb{R}^d$ with radius $\bar{r}$ satisfies 
\begin{align}
    \sup_{x\in E}\|x - x \cdot {\bf{1}}_{B_{\bar{r}}}\|_{\infty} < \frac{\epsilon}{4L}, \label{ine1-1}
\end{align}
where ${\bf{1}}_{B_{\bar{r}}}$ is the indicator function on $B_{\bar{r}}$.
Here, although $x\cdot {\bf{1}}_{B_{\bar{r}}}$ approximates $x$, it may not be included in $\mathcal{C}_0(\mathcal{S})$.
Then, we can take a continuous approximation function $\tilde{\bf{1}}_{B_{r}}\in \mathcal{C}_0(\mathcal{S})$ of the indicator function ${\bf{1}}_{B_{r}}$ such that the support equals $B_{r}$, $0\le \tilde{\bf{1}}_{B_{r}}\le 1$, $\tilde{\bf{1}}_{B_{r}} = 1$ on $B_{r-\delta}$ and it satisfies
\begin{align}
    \sup_{x\in E}\| x \cdot {\bf{1}}_{B_{r}} - x \cdot \tilde{\bf{1}}_{B_{r}}\|_{\infty} < \frac{\epsilon}{4L}. \label{ine1-2}
\end{align}
From (\ref{ine1-1}) and (\ref{ine1-2}),
we obtain 
\begin{align}
    \sup_{x\in E}\| x - x \cdot \tilde{\bf{1}}_{B_{r}}\|_{\infty} < \frac{\epsilon}{2L}. \label{ine1-3}
\end{align}

Since $E$ is assumed to be compact and $F$ is continuous, 
the image $F(E)$ is also compact in $\mathcal{C}_0(\mathcal{T})$.
Thus, using (A3) of Theorem \ref{AA-theorem} again, for any $\epsilon>0$, 
there exist $r'>0$ and $\delta'$,
for any $\bar{r}'$,
a compact ball $B_{\bar{r}'}:=\{x\in \mathcal{T}| \|x\|_{\mathbb{R}^{d'}}\le \bar{r}' \}\subset \mathcal{T}$ centered at $0\in\mathbb{R}^{d'}$ with radius $\bar{r}'$ satisfies 
\begin{align}
    \sup_{x\in E}\|F[x] - F[x] \cdot {\bf{1}}_{B_{\bar{r}'}}\|_{\infty} < \frac{\epsilon}{4}. \label{ine2-1}
\end{align}  
Then, we can take an continuous approximation function $\tilde{\bf{1}}_{B_{r'}}\in \mathcal{C}_0(\mathcal{T})$ of the indicator function ${\bf{1}}_{B_{r'}}\in \mathcal{C}(\mathcal{T})$ such that 
 the support equals $B_{r'}$, $0\le \tilde{\bf{1}}_{B_{r'}}\le 1$, $\tilde{\bf{1}}_{B_{r'}} = 1$ on $B_{r'-\delta'}$
 and it satisfies
\begin{align}
    \sup_{x\in E}\| F[x] \cdot {\bf{1}}_{B_{r'}} - F[x] \cdot \tilde{\bf{1}}_{B_{r'}}\|_{\infty} < \frac{\epsilon}{4}.  \label{ine2-2}
\end{align}
From (\ref{ine2-1}) and (\ref{ine2-2}),
we obtain 
\begin{align}
    \sup_{x\in E}\| F[x] - F[x] \cdot \tilde{\bf{1}}_{B_{r'}}\|_{\infty} < \frac{\epsilon}{2}. 
\end{align}

We define the smoothed restriction function $\tilde{R}_{B_{r}}:\mathcal{C}_0(\mathcal{S})\to \mathcal{C}_0(\mathcal{S})$ as $\tilde{R}_{B_{r}}(x) := x\cdot \tilde{\bf{1}}_{B_{r}}$ 
and $\tilde{R}_{B_{r'}}:\mathcal{C}_0(\mathcal{T}) \to \mathcal{C}_0(\mathcal{T})$ as $\tilde{R}_{B_{r'}}(x') := x'\cdot \tilde{\bf{1}}_{B_{r'}}$.
Then, for any $x\in E$, we obtain 
\begin{align*}
     &\| F|_E[x] -  \tilde{R}_{B_{r'}} \circ F \circ \tilde{R}_{B_{r}}[x]\|_{\infty} \\
    \le~&  \| F[x] -  F[x\cdot \tilde{\bf{1}}_{B_{r}}] \cdot \tilde{\bf{1}}_{B_{r'}}\|_{\infty} \\
    \le~&  \| F[x] - F[x] \cdot \tilde{\bf{1}}_{B_{r'}}\|_{\infty} + \| F[x] \cdot \tilde{\bf{1}}_{B_{r'}} -  F[x\cdot \tilde{\bf{1}}_{B_{r}}] \cdot \tilde{\bf{1}}_{B_{r'}}\|_{\infty}\\
    \le~&  \| F[x] - F[x] \cdot \tilde{\bf{1}}_{B_{r'}}\|_{\infty} + \| F[x] -  F[x\cdot \tilde{\bf{1}}_{B_{r}}] \|_{\infty}\\
    \le~&  \| F[x] - F[x] \cdot \tilde{\bf{1}}_{B_{r'}}\|_{\infty} + L \| x -  x\cdot \tilde{\bf{1}}_{B_{r}} \|_{\infty}\\
    <~&  \frac{\epsilon}{2} + L\cdot \frac{\epsilon}{2L}
    =\epsilon.
\end{align*}
From the above discussion, 
we can approximate $F|_E$ by $\tilde{R}_{B_{r'}} \circ F \circ \tilde{R}_{B_{r}}$.
Thus, it is enough to show that $\tilde{R}_{B_{r'}} \circ F \circ \tilde{R}_{B_{r}}$ can be approximated by an FNN.

For a compact set $K\subset \mathcal{S}$, 
let $\mathcal{C}(K)|_{\partial K=0}:=\{x\in \mathcal{C}(K)| x|_{\partial K}\equiv 0\}$, where $\partial K$ is the boundary set of $K$.
Then, 
we define the inclusion $\iota_K:\mathcal{C}(K)|_{\partial K=0}\to \mathcal{C}_0(\mathbb{R}^d)$ as 
\begin{align*}
    \iota_K(x)(s)
    =\left\{
    \begin{array}{cc}
         x(s) & (s\in K) \\
         0 & (s\notin K).
    \end{array}
    \right.
\end{align*}
We can verify that $\iota_K$ is a bounded affine map.
Moreover,
we define the restriction function $R_{K}:\mathcal{C}_0(\mathcal{S})\to \mathcal{C}(K)$ for a subset $K\subset \mathcal{S}$ as $R_{K}(x):=x|_K$.
Using the above notions,
we have 
\begin{align}
    \tilde{R}_{B_{r'}} \circ F \circ \tilde{R}_{B_{r}}
    = \iota_{B_{r'}} \circ (R_{B_{r'}}\circ \tilde{R}_{B_{r'}} \circ F \circ \iota_{B_{r}}) \circ (R_{B_{r}} \circ \tilde{R}_{B_{r}}).
\end{align}
Thus, 
in order to approximate $\tilde{R}_{B_{r'}} \circ F \circ \tilde{R}_{B_{r}}$ by an FNN,
we show that both $R_{B_{r'}}\circ\tilde{R}_{B_{r'}} \circ F \circ \iota_{B_{r}}|_{E'}$ and $R_{B_{r}} \circ \tilde{R}_{B_{r}}$ can be approximated by FNNs,
where $E':=R_{B_{r}} \circ \tilde{R}_{B_{r}}(E)$.

First, we prove that $R_{B_{r'}}\circ\tilde{R}_{B_{r'}} \circ F \circ \iota_{B_{r}}|_{E'}$ can be approximated by a two-layer FNN. 
Since $R_{B_{r}} \circ \tilde{R}_{B_{r}}:E \to \mathcal{C}(B_{r})|_{\partial B_{r}=0}$ 
is continuous, the image $E'$ is compact in $\mathcal{C}(B_{r})|_{\partial B_{r}=0}\subset \mathcal{C}(B_r)$ because of the compactness of $E$.
Then, using Theorem \ref{UAT1-modified}, $R_{B_{r'}}\circ\tilde{R}_{B_{r'}} \circ F \circ \iota_{B_{r}}|_{E'}: E'\to \mathcal{C}(B_{r'})$ is approximated by a two-layer FNN with any precision.

Next, we prove that $R_{B_{r}} \circ \tilde{R}_{B_{r}}$ can be approximated by a bounded affine map. 
We denote by $\delta_t$ the Dirac delta function at $t\in\mathcal{S}$.
Let $w(t,s):= \tilde{\bf{1}}_{B_{r'}}(s) \delta_t(s)$ and $b(t)\equiv 0$ in  (\ref{affine2}).
Then, 
the following holds:  
\begin{align*}
    A[x](t)
    = \int_{\mathbb{R}^d}x(s)\tilde{\bf{1}}_{B_{r'}}(s) \delta_t(s) d\mu(s)
    = x(t)\tilde{\bf{1}}_{B_{r'}}(t)
    = R_{B_{r}} \circ \tilde{R}_{B_{r}}[x](t).
\end{align*}
Thus, $R_{B_{r}} \circ \tilde{R}_{B_{r}}$ is exactly represented by a bounded affine map if the Dirac delta function is allowed.
However, the Dirac delta function is not a function but a generalized function.
Here, the Dirac delta $\delta_t$ can be approximated by a smooth function $\tilde{\delta}_t$ called a mollifier with any precision.
Thus, instead of $\tilde{\bf{1}}_{B_{r'}}(s) \delta_t(s)$, taking $\tilde{\bf{1}}_{B_{r'}}(s) \tilde{\delta}_t(s)$ as $w(t,s)$,
we can verify that $R_{B_{r}} \circ \tilde{R}_{B_{r}}$ is approximated by a bounded affine map with any precision.

From the above discussion,
for any $\epsilon >0$,
there exist a bounded affine map $A$ and a two-layer FNN $\phi$ such that 
$\|R_{B_{r}} \circ \tilde{R}_{B_{r}} - A\|_{\infty}\le \frac{\epsilon}{2L}$ and $\|\tilde{R}_{B_{r'}} \circ F \circ \iota_{B_{r}}|_{E'} - \phi\|_{\infty}\le \frac{\epsilon}{2}$.
Thus, we have 
\begin{align*}
    ~&\|\tilde{R}_{B_{r'}} \circ F \circ \tilde{R}_{B_{r}} - \iota_{B_{r'}} \circ\phi\circ A \|_\infty\\
    =~& \|\iota_{B_{r'}} \circ (R_{B_{r'}}\circ \tilde{R}_{B_{r'}} \circ F \circ \iota_{B_{r}}) \circ (R_{B_{r}} \circ \tilde{R}_{B_{r}}) - \iota_{B_{r'}} \circ\phi\circ A \|_\infty\\
    \le~& \| (R_{B_{r'}}\circ \tilde{R}_{B_{r'}} \circ F \circ \iota_{B_{r}}) \circ (R_{B_{r}} \circ \tilde{R}_{B_{r}}) - \phi\circ A \|_\infty\\
    \le~ & \|(\tilde{R}_{B_{r'}} \circ F \circ \iota_{B_{r}}) \circ (R_{B_{r}} \circ \tilde{R}_{B_{r}}) - (\tilde{R}_{B_{r'}} \circ F \circ \iota_{B_{r}})\circ A \|_\infty\\
    &+ \|(\tilde{R}_{B_{r'}} \circ F \circ \iota_{B_{r}})\circ A - \phi\circ A \|_\infty\\
    \le~ & L \|R_{B_{r}} \circ \tilde{R}_{B_{r}} -  A \|_\infty
    + \|\tilde{R}_{B_{r'}} \circ F \circ \iota_{B_{r}} - \phi \|_\infty\\
    \le~ & \epsilon.
\end{align*}
Since $A$ and $\iota_{B_{r'}}$ are affine and $\phi$ is a two-layer FNN,
the map $\phi_E:=\iota_{B_{r'}} \circ\phi \circ A$ is also a two-layer FNN.
Thus, this concludes the proof.
\hspace{\fill}$\blacksquare$

\section{Proof of Conversion Theorem 
}\label{sec:proof-thm}



In this section, we prove Theorem \ref{conversion-thm}.
Since $\phi:E\to \mathcal{C}_0(\mathcal{B}_{\mathcal{T}})$ is a fully-connected neural network,
there exist topological spaces $\mathcal{B}_{\ell}$ for $\ell=1,\ldots,L-1$
and affine maps $A_1: E\to \mathcal{C}_0(\mathcal{B}_1)$, 
$A_\ell: \mathcal{C}_0(\mathcal{B}_{\ell-1})\to \mathcal{C}_0(\mathcal{B}_{\ell})$ for $\ell=1,\ldots,L-1$, 
and $A_L: \mathcal{C}_0(\mathcal{B}_{L-1})\to \mathcal{C}_0(\mathcal{B}_{\mathcal{T}})$
such that the FNN $\phi=A_{L} \circ \rho \circ A_{L-1} \circ \cdots \circ \rho \circ A_{1}:E\to \mathcal{C}_0(\mathcal{B}_{\mathcal{T}})$.
Here, we note that the sets $\mathcal{B}_\ell$ for $\ell=1,\ldots,L-1$ does not relate to the action of the group $G$ while $\mathcal{B}_{\mathcal{S}}$ and $\mathcal{B}_{\mathcal{T}}$ are defined via the action of a group $G$.
When we define as $\mathcal{S}_{\ell}:=G/H_{\mathcal{T}}\times \mathcal{B}_{\ell}$ for $\ell=1,\ldots,L-1$,
the action of $G$ on $\mathcal{S}_{\ell}$ is naturally defined by the action of $G$ on $G/H_{\mathcal{T}}$.
Then, the sets $\mathcal{B}_{\ell}$ for $\ell=1,\ldots,L-1$ become the base space by the definition of $\mathcal{S}_{\ell}$.
For brevity, 
we denote $\mathcal{B}_{\mathcal{T}}$ by $\mathcal{B}_L$ and $\mathcal{T}$ by $\mathcal{S}_L$.

In the following, 
for the fully-connected neural network $\phi$,
we show the existence of a group-convolutional neural network $\Phi:= C_{L} \circ \rho  \circ C_{L-1} \circ \cdots \circ \rho \circ C_{1}$ 
such that $C_1:E\to \mathcal{C}_0(\mathcal{S}_1)$ and $C_\ell:\mathcal{C}_0(\mathcal{S}_{\ell-1}) \to \mathcal{C}_0(\mathcal{S}_{\ell})$ for $\ell=2,\ldots, L$ are biased $G$-convolutions
and $\Phi$ satisfies (\ref{proj2}).

First, we construct $C_1$.
Since $A_{1}: E \rightarrow \mathcal{C}_0(\mathcal{B}_{1})$ is affine, 
there are $w^{(1)}(\tau,\cdot) \in \mathcal{C}(\mathcal{S})$ and $b^{(1)}({\tau}) \in \mathbb{R}$
for each $\tau\in \mathcal{B}_1$ such that
$A_{1}[\cdot]=W^{(1)}[\cdot]+b^{(1)}$, 
where 
$W^{(1)}: E \rightarrow \mathcal{C}_0(\mathcal{B}_{1})$ satisfies 
\begin{align*}
W^{(1)}[x](\tau)
= \int_{\mathcal{S}}  w^{(1)}(\tau,s) x(s) d\mu^{(1)}(s).
\end{align*}

%

From the assumption (C1) of Theorem \ref{conversion-thm},
there exists a $G$-invariant measure $\nu_1$ such that $\mu^{(1)}$ is absolute continuous with respect to $\nu_1$.
Thus, we can set in (\ref{CNN-model}) as 
\begin{align*}
    v_1((g,\tau),s)&:= w^{(1)}({\tau}, g^{-1}\cdot s)\frac{d\mu^{(1)}}{d\nu_1}(g^{-1}\cdot s),\\
    b_1(g,\tau)&:= b^{(1)}({\tau}),
\end{align*}
where $g\in G/H_{\mathcal{T}}$, and $t\in\mathcal{B}_{1}$. 
Then, one can easily verify that these functions are $G$-invariant.
Then, $C_{1}: E \rightarrow \mathcal{C}(\mathcal{S}_1)$ is given by 
\begin{align*}
C_{1}[x] (g,\tau)
    &:= \int_{\mathcal{S}}  v_1((g, \tau), s) x(s) d\nu_1(s) + b_1(g,\tau),
\end{align*}
where $x\in E$.
Moreover, the following holds for arbitrary $x \in E$ and $\tau\in \mathcal{B}_1$:
\begin{align}
\begin{aligned}
R_{\mathcal{B}_1}\circ C_{1}[x] (\tau)
&= \int_{\mathcal{S}} x(s) w^{(1)}(\tau,1^{-1}\cdot s)\frac{d\mu^{(1)}}{d\nu_1}(1^{-1}\cdot s) d\nu(s)  + b^{(1)}(\tau)\\
&=\int_{\mathcal{S}} {w}^{(1)}(\tau,s) x(s) d\mu^{(1)}(s)  + b^{(1)}(\tau) \label{ineq3}\\
&=A_{1}[x](t).
\end{aligned}
\end{align}
Thus, we obtain 
\begin{align}
    R_{\mathcal{B}_1 } \circ C_{1} = A_{1}. \label{commute1}
\end{align}

Next, we construct $C_\ell$ for $\ell \in \{2,\ldots,L\}$.
Since $A_{\ell}: \mathcal{C}(\mathcal{B}_{\ell-1}) \rightarrow \mathcal{C}(\mathcal{B}_{\ell})$ is affine, 
there are $w^{(\ell)}(\tau,\varsigma) \in \mathbb{R}$ and $b^{(\ell)}(\tau) \in \mathbb{R}$
for each $\tau\in \mathcal{B}_\ell $ and $\varsigma \in  \mathcal{B}_{\ell-1}$ such that
$A_{\ell}[\cdot]=W^{(\ell)}[\cdot]+b^{(\ell)}$, 
where 
$W^{(\ell)}: \mathcal{C}(\mathcal{B}_{\ell-1}) \rightarrow \mathcal{C}(\mathcal{B}_\ell)$ satisfies 
\begin{align*}
W^{(\ell)}[x](\tau)
= \int_{\mathcal{B}_{\ell-1}} x(\varsigma) w^{(\ell)}(\tau,\varsigma) d\mu^{(\ell)}(\varsigma),
\end{align*}
where $x\in\mathcal{C}(\mathcal{B}_{\ell-1})$ and $\tau\in\mathcal{B}_{\ell}$. 
For $\ell \in \{2,\ldots,L\}$, 
we set as $\nu_{\ell}:=\nu_{G/H_{\mathcal{T}}}\times \mu^{(\ell)}$,
$v_\ell((g,\tau),(h,\varsigma)):= \delta(h, g) w^{(\ell)}(\tau,\varsigma)$
and $b_\ell(g,\tau):=b^{(\ell)}(\tau)$ in (\ref{CNN-model}), 
where $h,g\in G/H_{\mathcal{T}}$, $\varsigma\in\mathcal{B}_{\ell-1}$, $\tau\in\mathcal{B}_{\ell}$, 
and $\delta(h,\cdot)$ is the Dirac delta function at $h\in G/H_{\mathcal{T}}$.
Then, $C_{\ell}: \mathcal{C}(\mathcal{S}_{\ell-1}) \rightarrow \mathcal{C}(\mathcal{S}_\ell)$ is given by 
\begin{align*}
C_{\ell}[x] (g,\tau)
    &:= \int_{G/H_{\mathcal{T}}\times \mathcal{B}_{\ell-1}} x(h,\varsigma) v_\ell((g, \tau), (h, \varsigma)) d\nu_{G/H_{\mathcal{T}}}(h)d\mu^{(\ell)}(\varsigma) + b_\ell(\tau)\\
    &= \int_{G/H_{\mathcal{T}}\times\mathcal{B}_{\ell-1}} x(h,\varsigma) \delta(h, g) w^{(\ell)}(\tau,\varsigma) d\nu_{G/H_{\mathcal{T}}}(h)d\mu^{(\ell)}(\varsigma)  + b^{(\ell)}(\tau)\\
    &= \int_{\mathcal{B}_{\ell-1}} x(g,\varsigma) w^{(\ell)}(\tau,\varsigma) \mu^{(\ell)}(\varsigma)  + b^{(\ell)}(\tau)\\
    &= W^{(\ell)}[x_g](\tau) + b^{(\ell)}(\tau)\\
    &= A_{\ell}[x_g](t),
\end{align*}
where $x\in\mathcal{C}(\mathcal{S}_{\ell-1})$ and $x_g(\varsigma):=x(g,\varsigma)$. 
Then, the following equation holds for $\ell\in\{2,\ldots, L-1\}$ by the definition of $R_{ \mathcal{B}_{\ell} }$:
\begin{align*}
    R_{ \mathcal{B}_{\ell} } \circ C_{\ell} - A_{\ell} \circ R_{ \mathcal{B}_{\ell-1}} = 0.
\end{align*}

We note that the Dirac delta function $\delta$ used above is not a function but a generalized function.
Here, it can be approximated by a smooth function $\tilde{\delta}$ called a mollifier with any precision.

Thus, replacing $\delta$ by $\tilde{\delta}$ in $C_\ell$,
we obtain the following inequality for $\ell\in\{2,\ldots,L\}$:
\begin{align}
    \|R_{ \mathcal{B}_{\ell} } \circ C_{\ell} - A_{\ell} \circ R_{ \mathcal{B}_{\ell-1}}\|_{\infty} < \epsilon'. \label{commute2}
\end{align}

Since $\rho$ acts component-wise, the following equation hold:
\begin{align}
    R_{\mathcal{B}_1} \circ \rho &= \rho \circ R_{\mathcal{B}_1}. \label{commute3}
\end{align}

\begin{figure}[t]
    \begin{center}
        \begin{tabular}{c}
        \hspace{-1cm}
        \includegraphics[bb=0 0 650 170, 
        scale=0.40]{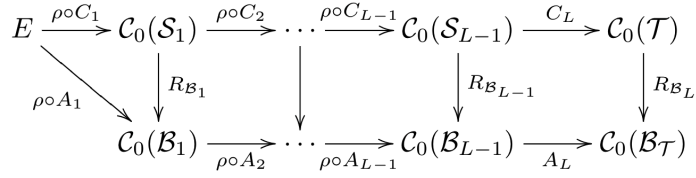}
        \end{tabular}
    \caption{Approximately commutative diagram.\label{fig:diagram}}
    \end{center}
\end{figure}
By (\ref{commute1}), (\ref{commute2}), and (\ref{commute3}),
we can see the diagram in Figure \ref{fig:diagram} is "approximately" commutative.

We note that every $A_\ell$ is Lipschitz because $W^{(\ell)}$ is a bounded linear operator.
Using (\ref{commute1}), (\ref{commute2}), (\ref{commute3}), and the fact that $A_\ell$ is Lipschitz,
and taking a small $\epsilon'$, 
we obtain  
\begin{align}
\begin{aligned}
\|R_{\mathcal{B}_{L}} \circ  \Phi - \phi\|_{\infty}
< \epsilon. \label{restriction-error}
\end{aligned}
\end{align}

Lastly, we show the inequality (\ref{norm2}):
\begin{align}
\left\|F|_E- \Phi \right\|_{\infty} 
&= \left\|R_{\mathcal{B}_{L}} \circ F|_E- R_{\mathcal{B}_{L}} \circ\Phi \right\|_{\infty} \label{eq:9-1}\\
&= \left\|(F_{\mathcal{B}_{\mathcal{T}}}|_E - \phi) + (\phi - R_{\mathcal{B}_{L}} \circ\Phi) \right\|_{\infty}\nonumber\\
&\le \left\| F_{\mathcal{B}_{\mathcal{T}}}|_E - \phi \right\|_{\infty} + \|\phi - R_{\mathcal{B}_{L}} \circ  \Phi\|_{\infty}\nonumber\\ 
&\le \left\| F_{\mathcal{B}_{\mathcal{T}}}|_E - \phi \right\|_{\infty} + \epsilon, \label{eq:9-2}
\end{align}
where we used Proposition \ref{restriction-norm} in (\ref{eq:9-1}) and (\ref{restriction-error}) in (\ref{eq:9-1}).
\hspace{\fill}$\blacksquare$

\section{Proof of Universality of DeepSets 
}\label{sec:proof-deepsets}

We set $G=S_n$ , $H = \mbox{Stab}(1):= \{s \in S_n \mid s(1)=1 \}$ and $B=\{*\}$, where $\{\ast\}$ is a singleton.
Then we can see that $\mbox{Stab}(1)$ is a subgroup of $G$ and its left cosets $G/H = [n]$ . 
\begin{lem}
As a set, $S_n/\mbox{Stab}(1)$  is equal to $[n]$, and the canonical $S_n$-action on $ S_n/\mbox{Stab}(1)$ is equivalent to the permutation action on $[n]$.
\end{lem}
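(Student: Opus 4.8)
The plan is to establish the standard orbit--stabilizer correspondence explicitly, as a $G$-equivariant bijection rather than a mere set-level identification, since the lemma asserts both. First I would define the candidate map $\Psi : S_n/\mathrm{Stab}(1) \to [n]$ by evaluating a coset representative at the point $1 \in [n]$, namely $\Psi(g\,\mathrm{Stab}(1)) := g(1)$. The first thing to verify is well-definedness: if $g\,\mathrm{Stab}(1) = g'\,\mathrm{Stab}(1)$, then $g' = gh$ for some $h$ with $h(1)=1$, whence $g'(1) = g(h(1)) = g(1)$, so $\Psi$ does not depend on the choice of representative.

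Next I would check that $\Psi$ is a bijection. Surjectivity follows from transitivity of the permutation action: for any $i \in [n]$ there exists a permutation $g$ with $g(1) = i$, so $\Psi(g\,\mathrm{Stab}(1)) = i$. For injectivity, if $g(1) = g'(1)$ then $(g^{-1}g')(1) = 1$, hence $g^{-1}g' \in \mathrm{Stab}(1)$, which is precisely the condition $g\,\mathrm{Stab}(1) = g'\,\mathrm{Stab}(1)$. This yields the set-level identification $S_n/\mathrm{Stab}(1) \cong [n]$.

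Finally, to obtain the equivalence of the two actions, I would verify that $\Psi$ intertwines the canonical left action on cosets with the permutation action on $[n]$: for any $a \in S_n$,
\[
\Psi\bigl(a \cdot (g\,\mathrm{Stab}(1))\bigr) = \Psi\bigl((ag)\,\mathrm{Stab}(1)\bigr) = (ag)(1) = a\bigl(g(1)\bigr) = a \cdot \Psi\bigl(g\,\mathrm{Stab}(1)\bigr).
\]
Since $\Psi$ is an equivariant bijection, the two $S_n$-sets are isomorphic, which is exactly the asserted equivalence. There is no genuine obstacle here, as this is the orbit--stabilizer theorem specialized to the transitive permutation action; the only care required is to keep the two occurrences of the symbol $1$ distinct (the identity of $S_n$ versus the point $1 \in [n]$) and to fix at the outset that \emph{equivalence} of actions means the existence of such a $G$-equivariant bijection.
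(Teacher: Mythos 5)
Your proof is correct, and it takes a somewhat different route from the paper's. The paper first counts: it notes $\mathrm{Stab}(1)\cong S_{n-1}$, so $|S_n/\mathrm{Stab}(1)| = n!/(n-1)! = n$, then fixes the explicit transversal $\overline{(1\ 1)},\overline{(1\ 2)},\ldots,\overline{(1\ n)}$ of transpositions, justified by the decomposition $s = (1\ s(1))\cdot t$ with $t\in\mathrm{Stab}(1)$, and finally checks the action on these representatives via $s\cdot(1\ j) = (1\ s(j))\cdot t'$, so that $s\cdot\overline{(1\ j)} = \overline{(1\ s(j))}$ matches permutation. You instead construct the canonical evaluation map $\Psi(g\,\mathrm{Stab}(1)) = g(1)$ and verify well-definedness, bijectivity, and equivariance directly; no transversal and no counting are needed. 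The two constructions are mutually inverse ($j\mapsto\overline{(1\ j)}$ versus $\overline{g}\mapsto g(1)$), so the underlying isomorphism of $S_n$-sets is the same, but your argument is representative-free and is exactly the orbit--stabilizer theorem for a transitive action, hence generalizes verbatim to any group acting transitively on any set; the paper's version buys concreteness, exhibiting explicit coset representatives (the transpositions $(1\ j)$) that make the identification tangible, at the cost of a counting step and a choice of transversal that your proof avoids. Your closing care about distinguishing the identity of $S_n$ from the point $1\in[n]$, and about pinning down what \emph{equivalence of actions} means, addresses ambiguities the paper leaves implicit.
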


\begin{proof}
Firstly, we can see that $\mbox{Stab}(1)$ is isomorphic to $S_{n-1}$ as a group, since $\mbox{Stab}(1)$ can freely permute any element other than 1. Therefore $|S_n/\mbox{Stab}(1)| = |S_n/S_{n-1}|=n!/(n-1)! =n$. 
Next, we confirm that the action on $S_n/\mbox{Stab}(1)$ is equal to permutation on $[n]$ as a representation. 
To see this, we consider a complete system of representatives of $S_n/\mbox{Stab}(1)$. 
We can take $\overline{(1\ 1)}, \overline{(1\ 2)},\ldots,\overline{(1\ n)}$ as a  complete system of representatives. 
This is because for any $s \in S_n$ there is a decomposition  $s = (1\  s(1))\cdot t$  for some $t \in \mbox{Stab}(1)$. 
Here, we note that $\overline{s} = \overline{(1 \ s(1))}$ by this formula. 
Finally, we see that the  $S_n$-action on $\{\overline{(1\ 1)}, \overline{(1\ 2)},\ldots,\overline{(1\ n)} \} =[n]$  ( $\overline{(1\ j)}\mapsto j $ ) coincide with the permutation action. 
When we take  $s \in S_n$ and $ \overline{(1\ j)} \in \{\overline{(1\ 1)}, \overline{(1\ 2)},\ldots,\overline{(1\ n)} \}$, we have $s \cdot (1\ j) = (1\ s(j)) \cdot t'$ for some $t' \in \mbox{Stab}(1)$ . 
This implies that $s \cdot \overline{(1\ j)} = \overline{(1\ s(j))}$  and this is equivalent to the permutation action $s\cdot j = s(j)$ by the correspondence above. 
\end{proof}

Therefore, $\mathcal{C}(G/H \times B) = \mathcal{C}([n])= \mathbb{R}^n$ holds, and the equivariant model of our paper is equal to the one of DeepSets.

\textbf{Theorem \ref{deepsets}}.
\textit{
For any permutation equivariant function $F: \mathbb{R}^n \to \mathbb{R}^n$, 
a compact set $E\subset \mathbb{R}^n$ and $\epsilon>0$, there is an equivariant model of DeepSets (or equivalently, our model) $\Phi_E:E\to\mathbb{R}^n$ such that 
$\|\Phi_E(x) - F|_E(x)  \|_{\infty}< \epsilon $.
}

\begin{proof}
Firstly, we see that our model is equal to the equivariant model of DeepSets when $\mathcal{S}$ is $[n]$. Our group convolution is defined by
    \begin{align*}
        C_{\nu, v,b}[x](t) 
        := \int_{\mathcal{S}} v(t,s) x(s) d\nu(s) + b(t). 
    \end{align*}
    
   Since $\mathcal{S}=\mathcal{T}=[n]$, we have
       \begin{align*}
         \int_{\mathcal{S}} v(t,s) x(s) d\nu(s) + b(t)
         = \sum_{s \in \mathcal{S}}  v(t,s)x(s) + b(t).
    \end{align*}
    Therefore, the map $C_{\nu, v,b}:  \mathbb{R}^n \to \mathbb{R}^n$ is induced by the matrix $W=(v(i,j))_{i, j \in[n]} $  and bias $b(t)$. Here, since $v: \mathcal{S}\times \mathcal{T} \to \mathbb{R}$ is $G$-invariant, $v(i, j)$ satisfies the condition $v((k, l)\cdot i, (k, l)\cdot j)=v(i, j)$ for any transition $(k, l)$. This implies the parametrization $W = \lambda E + \gamma {\bf 1}{\bf 1}^{\top}$ by direct calculation.
 \end{proof}

\end{document}